\providecommand{\A}{\mathcal{A}}
\providecommand{\nat}{\mathbb{N}}
\providecommand{\Strat}{\textit{Strat}}
\providecommand{\strat}{\Strat}
\providecommand{\pfin}{{P^{\text{fin}}}}
\providecommand{\ce}{{\between}}
\providecommand{\us}{\Gamma}
\providecommand{\them}{\Lambda}
\providecommand{\aco}{approximated consequence operator\xspace}
\providecommand{\ado}{\aco}
\providecommand{\axiom}{argument\xspace}
\providecommand{\axioms}{arguments\xspace}
\providecommand{\phi}{\varphi}
\newcommand{\X}{\mathcal{A}}
\DeclareMathOperator{\ran}{ran}
\newcommand{\restrict}{\upharpoonright}
\begin{document}
%
% \title{Dialectical Systems and Belief Revision}
% \title{On the role of counterexample in AI}
% \title{The necessity of both contradiction and counterexample in iterated belief revision}
% \title{Truth by Trial: Dialectical Models of Adaptive Belief Revision}
% \title{From Contradictions to Counterexamples: Enhancing Belief Revision in Dialectical Systems}
% \title{Reasoning with Feedback: A Comparative Study of Dialectical Systems}
% \title{Learning from Mistakes: Dialectical Systems and the Logic of Belief Change}
% \title{Contradiction, Counterexample, and the Evolution of Belief}
% \title{On the Expressive Strength of Dialectical Systems in Belief Revision}
\title{Comparing Dialectical Systems:\\ Contradiction and Counterexample in Belief Change (Extended Version)}

\titlerunning{Comparing Dialectical Systems (Extended Version)}
% If the paper title is too long for the running head, you can set
% an abbreviated paper title here
%
\author{Uri Andrews\inst{1}\orcidID{0000-0002-4653-7458}
%\thanks{This research was supported by the NSF grant DMS-2348792.} 
\and
Luca San Mauro\inst{2}\orcidID{0000-0002-3156-6870}
%\thanks{San Mauro is a member of INDAM-GNSAGA.} 
}
\authorrunning{U.\ Andrews, L.\ San Mauro}
% First names are abbreviated in the running head.
% If there are more than two authors, 'et al.' is used.
%
\institute{University of Wiscosin--Madison, Madison, WI 53706, USA
\email{andrews@math.wisc.edu}\\
\url{http://math.wisc.edu/$\sim$andrews} \and
University of Bari, Bari, Italy
\email{luca.sanmauro@gmail.com}\\
\url{https://www.lucasanmauro.com/}}
\maketitle              % typeset the header of the contribution
\begin{abstract}

% Dialectical systems are a mathematical formalism  for modeling an agent engaged in iterated belief revision over a stream of arguments. Originally introduced to capture how  a working mathematician---or a research community---refines beliefs in the pursuit of truth, these systems also serve as natural models for the iterated belief revision of an automated agent.

Dialectical systems are a mathematical formalism for modeling an agent updating a knowledge base seeking consistency. Introduced in the 1970s by Roberto Magari, they were originally conceived to capture how a working mathematician or a research community refines beliefs in the pursuit of truth. Dialectical systems also serve as natural models for the belief change of an automated agent, offering a unifying, computable framework for dynamic belief management.

The literature distinguishes three main models of dialectical systems: (d-)dialectical systems based on revising beliefs when they are seen to be inconsistent,
p-dialectical systems based on revising beliefs based on finding a counterexample, and q-dialectical systems which can do both.
% In the original (d-)dialectical systems, belief revision occurs when  
%when the agent detects an inconsistency in its current set of belief. Revision is carried out by identifying the smallest subset of beliefs that is inconsistent (contraction) and removing an additional offending argument (excision). In p-dialectical systems, the agent reacts not to contradictions but to counterexamples. When a counterexample is encountered, the agent contracts to the minimal subset responsible for it, and then performs replacement, substituting a belief with another that is not currently refuted by any known counterexample. Finally, a q-dialectical system can find either inconsistency or counterexamples, and can perform the corresponding action in response to each.

We answer an open problem in the literature by proving that q-dialectical systems are strictly more powerful  than p-dialectical systems, which are themselves known to be strictly stronger than (d-)dialectical systems. This result highlights the complementary roles of counterexample and contradiction in automated belief revision, and thus also in the reasoning processes of mathematicians and research communities. 
%Also, in the context of automated belief revision,  it underscores the necessity of incorporating both forms of feedback for adaptive reasoning.

\keywords{Belief change \and Dialectical systems \and 
Limiting belief sets} 
\end{abstract}

\section{Introduction}

The question of how a rational agent updates their beliefs in response to new information is fundamental to artificial intelligence research and central to the field of belief revision. Belief revision, 
 surveyed e.g.\ in \cite{ferme2018belief}, systematically investigates the logical and epistemological principles guiding changes in an agent's beliefs.

Modern belief revision is primarily characterized by the AGM framework, introduced by Alchourrón, Gärdenfors, and Makinson \cite{AGM}. Within AGM, an agent's beliefs are represented as sentences forming a deductively closed \emph{belief set}. \emph{Revision} becomes necessary when a new sentence contradicts the existing belief set. To preserve consistency, AGM incorporates an initial step known as \emph{contraction}. Central to this step is the concept of a \emph{remainder}, defined as the family of inclusion-maximal subsets of a given belief set that are consistent %\texttt{MAJOR CHANGE HERE -- FROM DON'T IMPLY PHI TO AREN'T INCONSISTENT WITH PHI} 
with the new sentence which the agent aims to incorporate. Further refinement is achieved through \emph{epistemic entrenchment}, a partial ordering reflecting the epistemic values the agent assigns to the elements of the belief set, thus determining which beliefs the agent is more willing to abandon, if needed.  AGM's principal innovation, \emph{partial meet contraction}, leverages epistemic entrenchment to select among maximal consistent subsets, ensuring minimal informational loss. 

Despite its theoretical elegance, the AGM model presupposes idealized agents endowed with unrealistic cognitive abilities. Specifically, for sufficiently expressive logical languages---including those susceptible to Gödelian incompleteness phenomena---determining whether a belief set is consistent with a given sentence is generally undecidable. Consequently, the computation of remainders becomes intractable, yet classical AGM-based revision procedures assume these operations to be executable in a single step. Recognizing these limitations has spurred interest in alternative frameworks better aligned with realistic computational constraints (see, e.g, \cite{dalal1988investigations,gardenfors1990dynamics,harman1986change,hansson1992defense,hansson1994kernel,wassermann1999resource,garapa2017advances}).

% In this paper, we explore a particular class of frameworks, known as \emph{dialectical systems} (along with their natural variants),  which are apt to model the \emph{internal} processes that an agent goes through to eventually arrive at a consistent belief state, rather than responding to \emph{external} revision inputs as in the AGM-based approaches. 

In this paper, we investigate a distinctive class of frameworks known as \emph{dialectical systems}---along with their natural variants---which are well-suited to model the \emph{internal} processes by which an agent arrives at a consistent belief state, in contrast to the \emph{external} revision operations characteristic of AGM-style approaches. These systems avoid reliance on unrealistic assumptions and provide a computational perspective on belief formation. Dialectical systems also offer algorithmic models for a range of problems in artificial intelligence, such as extracting a consistent subset from an inconsistent knowledge base, or capturing a single step of belief revision in the AGM tradition. These and related applications are discussed in detail in Section~\ref{sec:app total}.

%In this paper, we explore a particular class of frameworks, known as \emph{dialectical systems} (along with their natural variants), which model a computational approach, free from unrealistic assumptions, to restoring consistency within inconsistent knowledge bases. Dialectical systems can also be used as models of algorithmic solutions to several problems in artificial intelligence including an agent trying to form a consistent belief set from an inconsistent knowledge base, a single step of belief revision \'a la AGM, or iterated belief revision where an AGM-like framework is applied infinitely often to edit a knowledge base due to a possibly infinite stream of new knowledge. Each of these applications are discussed in Section \ref{sec:applications}

Dialectical systems, originating from the work of Roberto Magari and his school in the 1970s \cite{Magari,gnani1974insiemi,bernardi1974aspetti,montagna1996logic} (alongside the conceptually related Jeroslow's experimental logics \cite{Jeroslow,hajek1977experimental}), were initially proposed to model the evolution of real mathematical theories  and the process of discovery by a working mathematician or an entire mathematical community.  These systems explicitly employ trial-and-error methodologies, where \emph{\axioms} (or \emph{axioms}, following Magari's terminology)  are provisionally accepted and subsequently revised in response to emerging inconsistencies---events which are inherently unpredictable by computational means. Recently revived through the lens of computability theory \cite{dialectical1,dialectical2,dialectical3}, dialectical systems have re-emerged as fertile ground for contemporary investigation.

%In this paper, we explore a particular class of frameworks, known as \emph{dialectical systems} (along with their natural variants), as \emph{a dynamical, iterated belief revision approach free from noncomputable assumptions}, and thus prone to algorithmic implementation. Dialectical systems, originating from the work of Roberto Magari and his school in the 1970s \cite{Magari,gnani1974insiemi,bernardi1974aspetti,montagna1996logic} (alongside the conceptually related Jeroslow's experimental logics \cite{Jeroslow,hajek1977experimental}), were initially proposed to model the evolution of real mathematical theories  and the process of discovery by a working mathematician or an entire mathematical community.  These systems explicitly employ trial-and-error methodologies, where \emph{\axioms} (or \emph{axioms}, following Magari's terminology)  are provisionally accepted and subsequently revised in response to emerging inconsistencies---events which are inherently unpredictable by computational means. Recently revived through the lens of computability theory \cite{dialectical1,dialectical2,dialectical3}, dialectical systems have re-emerged as fertile ground for contemporary investigation.

In this paper, we address an open problem concerning the comparative expressive power of natural variants of dialectical systems, elucidating connections between two complimentary forms of performing belief change. 
%the concepts of \texttt{IS THE FOLLOWING THE RIGHT WORD? I THOUGHT THIS WAS FOR ANYTHING -- "BELIEF REVISION" BEING THE BROAD TOPIC} \emph{revision} and \emph{replacement} in belief change.
To contextualize this, we briefly describe dialectical systems, reserving formal definitions for Section \ref{sec:background}.

In essence, a dialectical system operates via a consequence operator which acts, stage by stage, upon a collection of provisionally accepted \axioms. At each stage, such a collection of \axioms expands by incorporating new \axioms drawn from a computable stream. Upon encountering \emph{contradictions}, a contraction procedure, guided by a naturally defined epistemic entrenchment ordering, restores consistency by selectively removing problematic \axioms. The expressive strength of the system is determined  by %the consequence closure of 
the set of \axioms eventually accepted,  which constitutes the system's limiting belief set.

The p-dialectical and the q-dialectical systems, introduced and thoroughly investigated  in \cite{dialectical1,dialectical2,dialectical3}, enhance basic dialectical systems by incorporating an \axiom replacement mechanism triggered by the emergence of \emph{counterexamples}. Unlike contradictions, which lead solely to the removal of problematic \axioms, counterexamples make it possible to retain part of the informational content of a discarded \axiom by \emph{replacing} it with an alternative. For instance, if the \axiom $a$ asserts that ``all prime numbers are odd'', the counterexample triggered by the prime number $2$ prompts a replacement such as ``all prime numbers greater than 2 are odd''.

In p-dialectical systems, the consequence operator produces only counterexamples, so replacement is always required when contraction occurs.
In contrast, q-dialectical systems may produce both counterexamples and contradictions, with replacement occurring in the former case and simple removal in the latter.

%In p-dialectical systems, the consequence operator exclusively yields counterexamples, mandating replacement upon contraction; conversely, q-dialectical systems may produce both counterexamples and contradictions, allowing---but not requiring---replacement. \texttt{THIS SEEMS A LITTLE MISLEADING TO ME -- IT'S NOT LIKE THE SYSTEM GETS A CHOICE}

Prior work \cite{dialectical2} established that both p- and q-dialectical systems surpass basic dialectical systems in expressive power but left open (see \cite[Problem 2.3]{dialectical3}) the relative expressiveness between the two variants. In Section \ref{sec:main results}, we solve this problem by showing that q-dialectical systems possess strictly greater expressive power than p-dialectical systems. This suggests that the ability to perform contractions both with and without replacement enhances the capabilities of rational agents. Our proof employs techniques from computability theory, specifically the finite injury priority method. Finally, Section \ref{sec:discussion} elaborates on implications of these findings for broader discussions in the belief revision literature.

\section{Background}\label{sec:background}

For any set $X$, we denote by $P(X)$  the \emph{power set} of $X$: i.e., the collection of all subsets of $X$. By $\pfin(X)$, we denote the collection of finite subsets of $X$. A function $f$ is acyclic if $f^n(x)\neq x$ for every $n\geq 1$ and $x$. Here $f^n(x)$ represents the $n$th iterate of $f$ applied to $x$.

A \emph{string} is a finite sequence of elements from a given set. For a string $\sigma$, $|\sigma|$ denotes its length;  $\sigma(n)$ denotes its $n$th element; and $\sigma\restrict_k$ denotes the string formed by taking the $k$ initial elements of $\sigma$.
 If a string $\tau$ coincides with $\sigma\restrict_k$, for some $k$, we write $\tau\preceq\sigma$ and say that $\tau$ is a \emph{prefix} of $\sigma$. Finally, the string $\sigma^\smallfrown \tau$ denotes the string formed by concatenating $\sigma$ and $\tau$.
 
% \texttt{Acyclic function}

We often make reference to (partial) \emph{computable functions}. These are the (partial) functions computed by a Turing machine. We use a standard indexing where $\varphi_e$ is the partial computable function whose instruction-set is encoded by the number $e$. The \emph{computably enumerable} (c.e.\ sets) are the ranges of partial computable functions. %The Halting Problem is the decision problem of determining whether a given Turing machine halts on a given input---or, equivalently, whether an arbitrary computation converges. It serves as the paradigmatic example of an undecidable problem.
%FROM URI: I refuse to believe there is a CS person on earth who doesn't know what the halting problem is.

%\subsection{Computability background}
%
%computable functions, c.e.\ sets, canonical indices, pairing functions, enumeration operators (?) \ldots
%\texttt{What do we actually use? Certainly computable functions, but maybe that's all.}

\subsection{Consequence and approximated consequence operators}

Our agents will revise their beliefs in response to seeing either a contradiction or counterexample arise from the set of \axioms that they currently accept. In aiming to model realistic reasoning, we do \emph{not} assume that it is immediately evident whether a given set of  \axioms is contradictory or admits a counterexample.  As shown by G\"odel's incompleteness theorems \cite{GodelI,Shoenfield1967}, even in highly idealized or extremely simple settings,
determining the inconsistency of a set of statements is as hard as solving the Halting Problem. It is therefore completely unrealistic to expect either an automated reasoner or a working mathematician  to reliably determine whether a belief set is consistent. Instead, we adopt the notion of an \ado, whereby a reasoner observes, over time, an increasing sequence of consequences derived from their \axioms. If the \axioms are inconsistent,  a contradiction will eventually be revealed at some finite stage (corresponding to a halting computation); however, no finite stage can ever certify consistency, as this would amount to verifying that a contradiction never arises (which corresponds to proving that a computation never halts).

Our consequence operators include two additional logical symbols: $\bot$, representing a contradiction, and $\ce$, representing  a counterexample. In many logical systems---arithmetic being a standard example---the statement $0 = 1$ is conventionally used to denote falsity, thereby playing the role of $\bot$. To maintain full generality in our framework, we introduce distinct symbols for contradiction and counterexample, without presupposing any specific properties of the set of \axioms $\A$.

\begin{definition}
%	For any set $\A$, $P(\A)$ is the power set of $\A$, and $\pfin(\A)$ is the collection of finite subsets of $\A$.
	Let $\A$ be a non-empty set.   A \emph{Tarskian consequence operator on} $\X$ is a function $H^\infty$ from $P(\X)$ to $P(\X\cup \{\bot,\ce\})$ satisfying the following conditions, for all $F, G\in P(\X)$,
    \begin{itemize}
        \item \emph{(Monotony)} $F\subseteq G$ implies $H^\infty(F)\subseteq H^\infty(G)$;
        \item \emph{(Inclusion)} $F\subseteq H^\infty(F)$;
        \item \emph{(Iteration)} $H^\infty(H^\infty(F)\cap \X)=H^\infty(F)$.
    \end{itemize}

    An \emph{\ado on} $\X$ is a computable function $H$ from $\nat\times \pfin(\X)$ to $\pfin(\X\cup \{\bot,\ce\})$ satisfying the following, for all $n\in \nat$ and $F, G\in \pfin(\X)$, 
    \begin{itemize}
        \item $F\subseteq G$ implies $H(n,F)\subseteq H(n,G)$;
        \item $F\subseteq H(n,F)\subseteq H(n+1,F)$;
%        \item For each $F\in \pfin(X)$, $F\subseteq H_0(F)$.
        \item letting $H^\infty(Y):=\bigcup_{n\in \nat, F\subseteq Y} H(n,F)$, we have that $H^\infty$ is a Tarskian consequence operator on $\X$.
    \end{itemize}
\end{definition}

\subsection{Representing current beliefs and operations of revision}

In a dialectical system, our agent forms beliefs about a countably infinite set of \axioms $\A$. At each stage $s$ the agent’s current belief state is represented by a string $\sigma_s$ of elements from 
$\A\cup \{\ast\}$. In each position, the string $\sigma_s$ either contains an \axiom from $\A$ or will contain the placeholder symbol $\ast$. The presence of $\ast$ at a given position indicates that the agent has encountered a reason to reject the corresponding \axiom, whereas the presence of the \axiom itself indicates continued belief in its truth.

%In a dialectical system, our agent  forms beliefs about a set of \axioms $\A$. At a given stage $s$, it will represent its current belief via a finite sequence of \axioms from $\A$ along with the symbol $\ast$. This represents the current belief in the truth of each \axiom which appears in $\A$. The symbol $\ast$ will be a place-holder to represent that the agent has already seen a reason to refute the \axiom that is missing in that place in the sequence. 

For a string $\sigma$ from $\A\cup \{\ast\}$, we let $\ran(\sigma)$ be the elements in $\A$ which appear in $\sigma$. %For the sake of exposition, we henceforth assume that all our sets of arguments are countably infinite.
%\footnote{This assumption does not restrict the generality of our analysis, as finite sets can be accommodated by allowing repetitions of arguments.}

\begin{definition} Let $\A:=\{a_i: i\in\nat\}$ be a set of arguments. We define four operations on strings to model how our agents revise their beliefs about the elements of $\A$:
    
    \begin{itemize}
        \item \emph{\textbf{Contraction:}} Given a string $\sigma$ and a number $k<\vert \sigma\vert$, this operation replaces $\sigma$  with its prefix $\sigma\restrict_k$;

  \item   \emph{\textbf{Expansion:}} Given a string 
 $\sigma$, this operation replaces $\sigma$ with $\sigma^\smallfrown a_{\vert\sigma\vert}$;

   \item  \emph{\textbf{Replacement:}} Given a string 
 $\sigma=\tau^\smallfrown a_i$, this operation  replaces $\sigma$ with $\tau^\smallfrown a_j$, for some $j\neq i$;

    \item \emph{\textbf{Excision:}} Given a string  $\sigma=\tau^\smallfrown a_i$, this operation replaces $\sigma$ by $\tau^\smallfrown \ast$.
    \end{itemize}
\end{definition}

These operations incorporate our two methods for revising the set of \axioms in response to either a contradiction or a counterexample. When a contradiction is detected, we apply \textbf{Contraction} followed by \textbf{Excision} to remove and mark the offending \axiom as contradictory. In the case of a counterexample, we instead apply \textbf{Contraction} followed by \textbf{Replacement}, allowing the original \axiom to be refined rather than discarded outright.

\subsection{Dialectical Systems}
We begin by formally defining a q-dialectical system. We note that we offer a re-design of the formal exposition of these systems, while ensuring that our definitions remain equivalent to the original ones; see 
 Appendix \ref{Appendix A} for a proof of such equivalence.
Dialectical systems and p-dialectical systems will then be introduced as special cases of q-dialectical systems\footnote{This ordering does not reflect the historical development of these notions---dialectical systems were introduced first, followed by q-dialectical and then p-dialectical systems---but we adopt this presentation for greater clarity.}. In a nutshell, a q-dialectical system is a framework for iterated belief revision  that accommodates both primary sources of revision: contradictions and counterexamples.

\begin{definition}\label{def:q-ours}
    A q-dialectical system $\Gamma$ is a triple $(\X, H, r)$ where $\X$ is some computable sequence $\{a_i : i\in \nat\}$ of \axioms, $H$ is an \ado on $\X$, and $r$ is a computable acyclic function from $\X$ to $\X$.    
\end{definition}

%\texttt{DO WE CARE THAT IT'S ACYCLIC? -- so we get loops? So what?}

The computable listing of the elements in $\A$ serves to impose an epistemic entrenchment ordering on the \axioms, which guides the choice of which remainder to retain (either via excision or replacement) during successive belief revision steps.

Restricting the system to respond to only one type of revision---either contradictions or counterexamples---yields the following  variants:

\begin{definition}
    A \emph{$p$-dialectical system} is a $q$-dialectical system in which contradiction are never produced: i.e., $\bot\notin H^\infty(\A)$. A \emph{dialectical system} (in the literature, also called a d-dialectical system) is a $q$-dialectical system in which counterexamples are never produced: i.e., $\ce\notin H^\infty(\A)$.
\end{definition}

The following definition specifies how our systems operate:

\begin{definition}\label{def:q-run-ours}
    Let $\Gamma$ be a $q$-dialectical system.  A \emph{run} of $\Gamma$ is an infinite sequence of strings $\sigma^\Gamma_i$ from $\A\cup \{\ast\}$, defined recursively as follows:

$\sigma^\Gamma_0$ is the empty string. 
Given $\sigma^\Gamma_s$, we define $\sigma^\Gamma_{s+1}$ by case distinction:
    \begin{itemize}
        \item \emph{Case 1:} $\bot$ or $\ce$ is in $H(s,\ran(\sigma^\Gamma_{s}))$. Let $k$ be least so that $\bot$ or $\ce$ is in $H(s,\ran(\sigma^\Gamma_{s}\restrict_k))$.
        \begin{itemize}
            \item If $\bot\in H(s,\ran(\sigma^\Gamma_{s}\restrict_k))$, then $\sigma^\Gamma_{s+1}=(\sigma^\Gamma_{s}\restrict_{k-1}) ^\smallfrown {\ast}$.
            \item Otherwise, $\sigma^\Gamma_{s+1}=(\sigma^\Gamma_{s}\restrict_{k-1})^\smallfrown r(\sigma^\us_s(k-1))$.
        \end{itemize}

        %     We apply contraction followed by excision. Let $k$ be greatest such that $\bot\notin H(s,\ran(\sigma^\Gamma_{s}\restrict_k))$, and define $\sigma^\Gamma_{s+1}=(\sigma^\Gamma_{s}\restrict_k) ^\smallfrown {\ast}$.
        % \item \emph{Case 2:} Not in the previous case, and $\ce\in H(s,\ran(\sigma^\Gamma_{s}))$.
        
        %     We apply  contraction followed by replacement. Let  $k$ be greatest such that $\ce \notin H(s,\ran(\sigma^\Gamma_{s}\restrict_k))$ and let $\sigma^\Gamma_{s+1}=(\sigma^\Gamma_{s}\restrict_k)^\smallfrown r(\sigma(k))$.
        \item \emph{Case 2:} Not case 1. We apply expansion. Let $\sigma^\Gamma_{s+1}=(\sigma^\Gamma_{s})^\smallfrown a_{\vert \sigma^\Gamma_s\vert}$.
    \end{itemize}

\end{definition}

\subsection{Limiting Belief Sets}

% \begin{definition}
%     A q-dialectical system is called loopless if, for each $n\in \nat$, there exists some $s\in \nat$ so that $\sigma_t(n)=\sigma_s(n)$ for every $t\geq s$. We write $\lim_s \sigma_s(n)$ for this value.
% \end{definition}

% We note that every dialectical system is loopless, but p- and q-dialectical systems may not be loopless, since the replacement function $r$ may iteratively suggest false revisions without us finding a contradiction. For example, 

\begin{definition}
    Let $\Gamma$ be a q-dialectical system, and let $n\in\nat$. If there exists some \axiom $a\in \A$ so that $\sigma^\Gamma_t(n)=a$ for all sufficiently large $t$, then we write $\lim_s \sigma^\Gamma_s(n)$ for $a$.
If $\lim_s \sigma^\Gamma_s(n)$ exists for every $n$, then we call $\Gamma$ \emph{loopless}. 
\end{definition}

Note that the determine if a given q-dialectical system is loopless is not a decidable property.

\begin{definition}
    For $\Gamma$ a q-dialectical system, the \emph{limiting belief set of $\Gamma$} is
\[    
  B_\Gamma = \{a\in \A : \exists n (a=\lim_s \sigma^\Gamma_s(n))\}=\ran(\lim_s\sigma_s^\us).
  \]
\end{definition}

Thus, the limiting belief set of a system $\Gamma$ comprises the arguments that are ultimately accepted (hence, believed) by the system.

\subsection{Related Work}
We briefly summarize the most pertinent results from the literature:
Firstly, the limiting belief sets are deductively closed.

\begin{theorem}[{\cite[Lemma 3.14]{dialectical1}}]
    If $\Gamma$ is a loopless q-dialectical system, then $B_\Gamma=H^\infty(B_\Gamma)$.
\end{theorem}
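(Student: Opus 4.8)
The plan is to prove the two inclusions of $B_\Gamma = H^\infty(B_\Gamma)$ separately. Throughout, write $\sigma_\infty$ for the infinite sequence over $\A\cup\{\ast\}$ defined by $\sigma_\infty(n)=\lim_s\sigma^\Gamma_s(n)$, so that $B_\Gamma=\ran(\sigma_\infty)$; note that looplessness forces $|\sigma^\Gamma_s|\to\infty$ and that for every $N$ there is a stage $s_0$ with $\sigma^\Gamma_s\restrict_N=\sigma_\infty\restrict_N$ for all $s\geq s_0$. The inclusion $B_\Gamma\subseteq H^\infty(B_\Gamma)$ is immediate from the Inclusion property of $H^\infty$. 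Since $H^\infty(B_\Gamma)\subseteq\A\cup\{\bot,\ce\}$, the reverse inclusion breaks into two claims: (i) $\bot,\ce\notin H^\infty(B_\Gamma)$, and (ii) $H^\infty(B_\Gamma)\cap\A\subseteq B_\Gamma$.

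Claim (i) is the heart of the argument, and I expect it to be the main obstacle. Suppose $\bot\in H^\infty(B_\Gamma)$, say $\bot\in H(m,F)$ for some $m$ and finite $F\subseteq B_\Gamma$. Choose $N$ minimal with $F\subseteq\ran(\sigma_\infty\restrict_N)$, so that in particular $\sigma_\infty(N-1)\in\A$, and fix $s_0$ with $\sigma^\Gamma_s\restrict_N=\sigma_\infty\restrict_N$ for $s\geq s_0$. For every $t\geq\max(s_0,m)$, the two monotonicity conditions on $H$ give $\bot\in H(m,F)\subseteq H(t,\ran(\sigma^\Gamma_t\restrict_N))\subseteq H(t,\ran(\sigma^\Gamma_t))$, so Case 1 of the run fires at stage $t$ with some least cut position $k_t$, and $k_t\leq N$ since $\bot$ already appears in $H(t,\ran(\sigma^\Gamma_t\restrict_N))$. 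But $|\sigma^\Gamma_{t+1}|=k_t$ while $t+1\geq s_0$ forces $|\sigma^\Gamma_{t+1}|\geq N$, whence $k_t=N$; thus position $N-1$ gets overwritten at stage $t$, either by $\ast$ (an excision) or by $r(\sigma_\infty(N-1))$ (a replacement). Both values differ from $\sigma_\infty(N-1)$ — for the replacement because acyclicity of $r$ gives $r(x)\neq x$ — contradicting the stability of the $N$-th entry beyond $s_0$. The identical argument handles $\ce$, so (i) holds; in particular $H^\infty(B_\Gamma)\subseteq\A$.

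For (ii), suppose toward a contradiction that $a_i\in H^\infty(B_\Gamma)\setminus B_\Gamma$. A purely operator-theoretic step comes first: since $a_i\in H^\infty(B_\Gamma)$ and, by (i), $H^\infty(B_\Gamma)\subseteq\A$, we have $B_\Gamma\cup\{a_i\}\subseteq H^\infty(B_\Gamma)$, so monotony and the Iteration property give $H^\infty(B_\Gamma\cup\{a_i\})\subseteq H^\infty(H^\infty(B_\Gamma))=H^\infty(B_\Gamma)$; hence $\bot,\ce\notin H^\infty(B_\Gamma\cup\{a_i\})$ by (i). Now I read a contradiction off the run. Because $|\sigma^\Gamma_s|\to\infty$ while position $i$ stabilizes, there is a last stage $t^*$ at which the run expands from length $i$; after $t^*$ the length is always $>i$, which forces the entries at positions $0,\dots,i-1$ to be frozen at their limits and the entry at position $i$ to equal $a_i$ until it is next disturbed. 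Since $a_i\notin B_\Gamma$, i.e.\ $\sigma_\infty(i)\neq a_i$, position $i$ must be disturbed at some least stage $s'>t^*$; there Case 1 fires with cut position exactly $i+1$ (a smaller cut would drop the length to $\leq i$, a larger one would leave position $i$ untouched), so $\bot$ or $\ce$ belongs to $H(s',\ran(\sigma^\Gamma_{s'}\restrict_{i+1}))=H(s',\ran(\sigma_\infty\restrict_i)\cup\{a_i\})\subseteq H^\infty(B_\Gamma\cup\{a_i\})$. This contradicts the operator-theoretic step, establishing (ii); combined with (i) and the trivial inclusion, $B_\Gamma=H^\infty(B_\Gamma)$.

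The most delicate point is (i): one must track that a contradiction which is genuinely derivable from the limit belief set is, at every sufficiently late stage, already visible to the run and hence must keep perturbing an entry of $\sigma^\Gamma_s$ that has supposedly already converged — and it is precisely in resolving the excision-versus-replacement alternative that the acyclicity of $r$ (and the exact shape of Case 1) is used.
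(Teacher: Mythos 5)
Your proof is correct, but it takes a different route from the paper: the paper does not prove this statement internally at all --- it is imported as \cite[Lemma 3.14]{dialectical1}, stated for the original formalism of q-dialectical systems, and transferred to the reformulated definition only through the equivalence theorems of Appendix~\ref{Appendix A} (where the claim ``if $\us$ is loopless then $B_\us=H^\infty(B_\us)$'' is read off from $H^\infty(A_q)=A_q$ via the translation $A_q=B_\us$). You instead give a direct, self-contained verification inside the streamlined framework of Definitions~\ref{def:q-ours} and~\ref{def:q-run-ours}: the inclusion $B_\Gamma\subseteq H^\infty(B_\Gamma)$ from Inclusion; the exclusion of $\bot,\ce$ by combining monotonicity of $H$ in both arguments with the stabilization of the prefix $\sigma_\infty\restrict_N$, so that a derivable contradiction or counterexample would force Case~1 to fire with cut position exactly $N$ and overwrite a converged entry (with acyclicity of $r$ ruling out a fixed-point replacement); and deductive closure by locating the last expansion from length $i$, noting positions $<i$ are frozen thereafter, and showing that expelling $a_i$ would require $\bot$ or $\ce$ in $H^\infty(B_\Gamma\cup\{a_i\})$, which the Iteration and Monotony axioms (plus the already-proved exclusion of $\bot,\ce$) forbid. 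This is essentially a reconstruction of the original Lemma~3.14 argument adapted to the new presentation; what it buys is self-containedness and a check that the redesigned definitions support the standard stability-versus-revision argument, whereas the paper's route buys economy by leaning on the published lemma and the Appendix~A translation. Two cosmetic points you may wish to smooth: the degenerate case $F=\emptyset$ (i.e.\ $\bot$ or $\ce\in H^\infty(\emptyset)$), where ``position $N-1$'' is meaningless but looplessness fails outright since the run can never grow; and note that the paper's definition of loopless literally requires every position to converge to an \emph{argument}, so your allowance of $\ast$-entries in $\sigma_\infty$ is more general than needed and harmless.
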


Next, dialectical systems, p-dialectical systems, and q-dialectical systems each have limiting belief sets in exactly the c.e.\ Turing degrees.

\begin{theorem}[{\cite[Theorem 3.2]{dialectical2}}]
    For any q-dialectical system $\Gamma$, the Turing degree of $B_\Gamma$ is the Turing degree of a c.e.\ set. Conversely, if $C$ is a c.e.\ set, then there is a dialectical system $\Gamma$, and a p-dialectical system $\Gamma'$ so that $B_\Gamma$ and $B_{\Gamma'}$ are in the same Turing degree as $C$.
\end{theorem}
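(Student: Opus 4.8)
The plan is to prove the two halves separately: the realization half (every c.e.\ Turing degree arises as $B_\Gamma$ for a d-dialectical system and as $B_{\Gamma'}$ for a p-dialectical system) by explicit construction, and the upper bound (every $B_\Gamma$ has the degree of a c.e.\ set) by a structural analysis of the run.

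\emph{Realization.} Fix a c.e.\ set $C$ with a computable enumeration $C=\bigcup_s C_s$. For the d-dialectical case I would keep the given listing $\A=\{a_i:i\in\nat\}$, let $r$ be any computable acyclic function (it is never invoked), and put $H(s,F)=F\cup\{\bot\}$ when $a_i\in F$ for some $i\in C_s$ and $H(s,F)=F$ otherwise; verifying that $H$ is an \aco and that $\ce$ is never produced is routine. The substance is the run analysis: every revision is an excision of $a_i$ at the least position carrying an argument $a_i$ with $i\in C$, and since only finitely many $i<p$ ever enter $C$, the prefix $\sigma^\Gamma_s\restrict_{p}$ changes only finitely often; a $\ast$ placed at position $i$ is erased only by a revision at a strictly smaller position (which would require a bad argument to sit below $i$, impossible once the lower positions have settled), and there are only finitely many such. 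Hence $\lim_s\sigma^\Gamma_s(i)=\ast$ exactly when $i\in C$, and $=a_i$ otherwise, so $B_\Gamma=\{a_i:i\notin C\}\equiv_T\overline{C}\equiv_T C$. For the p-dialectical case I would interleave two computable sequences $\{b_i\}$, $\{c_i\}$ (say $a_{2i}=b_i$, $a_{2i+1}=c_i$), set $r(b_i)=c_i$ and extend $r$ acyclically (e.g.\ $r(c_i)=c_{i+1}$), and put $H(s,F)=F\cup\{\ce\}$ when $b_i\in F$ for some $i\in C_s$; now $\bot$ is never produced, each $b_i$ with $i\in C$ is replaced by $c_i$ once and for all, and the analogous bottom-up analysis yields $B_{\Gamma'}=\{c_i:i\in\nat\}\cup\{b_i:i\notin C\}\equiv_T C$.

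\emph{Upper bound.} Fix a q-dialectical system $\Gamma$; its run $(\sigma^\Gamma_s)_s$ is uniformly computable. The first step is a structural lemma: $S=\{n:\lim_s\sigma^\Gamma_s(n)\ \text{exists}\}$ is an initial segment of $\nat$. The only stages that alter position $m$ are Case-1 events at a level $\le m+1$, each of which deletes every position $>m$, and expansions occurring when $\vert\sigma^\Gamma_s\vert=m$; so if position $m$ changes infinitely often, then either positions $>m$ are deleted infinitely often, or $\vert\sigma^\Gamma_s\vert=m$ infinitely often, and either way no $n>m$ can stabilize. If $S$ is finite, $B_\Gamma$ is finite hence computable and we are done; so assume $\Gamma$ is loopless, $\sigma_\infty=\lim_s\sigma^\Gamma_s$ exists, and $B_\Gamma=\ran(\sigma_\infty)$. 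I would then exhibit a c.e.\ set $W$ with $W\equiv_T B_\Gamma$. One direction comes easily by taking $W$ to record, for each $n$, how many times $\sigma^\Gamma_s\restrict_{n+1}$ changes: with the $W$-oracle one recovers (by a search that halts because $\Gamma$ is loopless) the total number of such changes, runs the computable simulation until they have all occurred, and reads off $\sigma_\infty\restrict_{n+1}$; this yields $B_\Gamma\le_T W$, and in particular $B_\Gamma\le_T\emptyset'$ — a necessary feature of any correct proof, as a set that is not $\Delta^0_2$ cannot have c.e.\ degree.

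\emph{The main obstacle} is the converse reduction, i.e.\ making $B_\Gamma$ itself compute a c.e.\ set of its degree; the change-count $W$ above encodes the whole revision history and need not share the degree of $B_\Gamma=\ran(\sigma_\infty)$. My approach would be to enumerate into $W$ only the revisions ``visible in the limit'' and show that $\sigma_\infty$ and this pruned set are mutually computable — and it is here that the hypotheses on $\Gamma$ must finally be exploited. Acyclicity of $r$ forces every replacement orbit $a_n,r(a_n),r^2(a_n),\dots$ to be infinite and repetition-free (if $r^k(x)=r^j(x)$ with $k>j\ge 1$ then $r^{k-j}$ fixes $r^j(x)$, contradicting acyclicity), so the limiting value at a position determines how many replacements it last underwent; and monotonicity of $H$ ensures that a contradiction or counterexample, once exhibited, is never withdrawn, so the limit configuration can be certified from computably enumerable data. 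Arranging the bookkeeping so that $W$ is simultaneously c.e., above $B_\Gamma$, and below $B_\Gamma$ is, I expect, the technical core of the argument.
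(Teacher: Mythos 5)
This statement is quoted background: the paper itself gives no proof and simply cites \cite[Theorem 3.2]{dialectical2}, so your proposal can only be judged on its own merits. The realization half is fine: your two constructions (excising $a_i$ upon $i\in C_s$ for the d-dialectical case, and replacing $b_i$ by $c_i$ for the p-dialectical case) are the standard coding, the operators you define are easily checked to be approximated consequence operators producing only $\bot$ (resp.\ only $\ce$), and the position-by-position run analysis giving $B_\Gamma=\{a_i: i\notin C\}$ and $B_{\Gamma'}=\{c_i:i\in\nat\}\cup\{b_i:i\notin C\}$, each Turing-equivalent to $C$, is correct.

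The upper-bound half, however, has a genuine gap, and it is exactly the hard content of the cited theorem. What you actually prove is only $B_\Gamma\le_T W\le_T\emptyset'$ for a c.e.\ change-counting set $W$, i.e.\ that $B_\Gamma$ is $\Delta^0_2$; as you yourself note, this is merely a necessary condition for having c.e.\ degree. The step you defer --- producing a c.e.\ set that is \emph{also} computable from $B_\Gamma$ --- is not a bookkeeping refinement but the core argument, and your sketch does not show it can be carried out: your $W$ essentially encodes the entire limit string $\sigma_\infty$ (indeed the whole revision history), whereas $B_\Gamma=\ran(\sigma_\infty)$ forgets positions, and there is no evident way for $B_\Gamma$ to recover which revisions are ``visible in the limit''; the observations you invoke (acyclicity of $r$, monotonicity of $H$) do not by themselves yield the reverse reduction. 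In \cite{dialectical2} this direction requires a genuinely different and more delicate argument (coding suitably chosen discarded data of the run into a c.e.\ set and proving mutual Turing reducibility with the limiting set, exploiting its deductive closure), so the proposal as it stands establishes only the converse half plus a weak upper bound. A minor additional wrinkle: your dichotomy ``$S$ finite or $\Gamma$ loopless'' silently treats $\ast$ as an admissible limit value, which the paper's definition of $\lim_s\sigma^\Gamma_s(n)$ does not; this is cosmetic compared with the missing reduction, but should be stated.
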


Despite the complexity of their limiting belief sets being the same, p-dialectical systems are strictly stronger than dialectical systems.

\begin{theorem}[{\cite[Theorem 2.1 and Corollary 6.11]{dialectical3}}]
    If $\Gamma$ is a dialectical system, then $B_\Gamma$ is the limiting belief set of some p-dialectical system $\Gamma'$.    
    
       However, there are p-dialectical systems $\Gamma$ so that $B_\Gamma$ is not the limiting belief set of any dialectical system.
\end{theorem}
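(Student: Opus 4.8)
The statement has two halves of quite different flavour. For the inclusion---every dialectical limiting belief set is also a p-dialectical one---the plan is a direct simulation. Given a dialectical system $\Gamma=(\A,H,r)$ with $\ce\notin H^\infty(\A)$, I would build a p-dialectical system $\Gamma'$ that replays the run of $\Gamma$, defining its \ado $H'$ from $H$ by relabelling $\bot$ as $\ce$ (so that $\bot\notin H'^\infty$, as a p-dialectical system requires). The genuine difficulty is that a p-dialectical system has no excision: where $\Gamma$ freezes a refuted argument as $\ast$, $\Gamma'$ must \emph{replace} it, and a replaced argument cannot be parked forever at a neutral value the way $\ast$ can, on pain of polluting $B_{\Gamma'}$. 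The work-around is to choose the enumeration of $\A$, the function $r$, and $H'$ so that a ``dead'' position is pushed one step along an $r$-chain and then, whenever the simulated run of $\Gamma$ would reinstate that argument, re-grown by a contraction to an earlier prefix followed by re-expansion; since each position of $\Gamma$ changes only finitely often, these corrections are finite and the run of $\Gamma'$ stabilises with $\ran(\sigma^{\Gamma'}_t)$ converging to $B_\Gamma$. Verifying that $H'$ is a legitimate \ado, that $r$ is acyclic, and that this bookkeeping leaves the limit untouched is routine but fiddly; this is the easier half.

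The separation is the heart of the matter, and I would prove it through a mind-change dichotomy. First, a lemma: for every dialectical system $\Delta$, the set $B_\Delta$ is $\omega$-c.e.---it has a computable approximation whose number of changes at each point $n$ is bounded by a computable function, in fact by $2^{n+1}$. The reason is that, lacking replacement, position $n$ of the run of $\Delta$ only ever holds $a_n$ or $\ast$, so it can toggle in and out of $\ran(\sigma^\Delta_s)$ only boundedly often: a short induction shows that the number of excisions made at level $j$ is at most $2^{j}$ (a second excision at level $j$ needs position $j$ to have been removed and re-grown in between, which needs an intervening excision at a lower level), so position $n$ changes at most $2^{n+1}$ times and $\ran(\sigma^\Delta_s)$ is the required approximation. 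Second, I would construct a p-dialectical system $\Gamma$ whose belief set $B_\Gamma$ is \emph{not} $\omega$-c.e. Replacement is precisely what makes this possible: the argument at a position may march all the way down an $r$-chain $a_i,r(a_i),r^2(a_i),\dots$ before freezing, and the length of this march is governed by when counterexamples cease to appear---a $\Pi^0_1$, hence non-decidable, event---so a single argument can be pushed into and out of $\ran(\sigma^\Gamma_s)$ more often than any prescribed computable bound permits. Concretely, I would run a finite-injury priority construction with requirements $R_e$ that defeat the $e$-th partial computable change-bound $\psi_e$: $R_e$ owns an ``engine'' position and a ``display'' position, drives the engine's content far enough along its $r$-chain to subject the display position to more than $\psi_e$ contraction-and-re-expansion cycles, and thereby forces a fixed argument to enter and leave $\ran(\sigma^\Gamma_s)$ more than $\psi_e$ many times, while higher-priority requirements may reset the positions owned by $R_e$ only finitely often. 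Since $B_\Gamma$ fails to be $\omega$-c.e.\ whereas every dialectical $B_\Delta$ is $\omega$-c.e., no dialectical system realises $B_\Gamma$.

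The main obstacle is this last construction and the tension it must manage. The diagonalisation has to be carried out \emph{inside} the rigid dynamics of a run---the only free choices are $H$ and $r$; one cannot simply stipulate what $B_\Gamma$ will be---while at the same time (i) keeping the $H$ produced so far a legitimate \ado (monotone, inclusive, with $H^\infty$ satisfying iteration) and the $r$ produced so far total and acyclic at every stage, and (ii) keeping $\Gamma$ loopless: every position must ultimately freeze, despite our deliberately engineering many mind-changes. The delicate point is reconciling ``for each computable bound, some level changes more often than that'' with ``every fixed level changes only finitely often''; in particular one must guarantee that no engine position accidentally loops, since a single looping position chops every higher position infinitely often and would collapse $B_\Gamma$ to a finite set, wrecking the diagonalisation. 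The priority layering, together with a schedule that advances each engine only after the relevant $\psi_e$-computation has been observed to converge, is what holds these requirements together.
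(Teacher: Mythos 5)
There is no in-paper proof to compare against here: this theorem is quoted background from \cite{dialectical3}, so I can only judge your argument on its own merits, and it has two genuine gaps. First, the inclusion you call ``routine but fiddly'' hides the actual crux. In a p-dialectical run there is no neutral placeholder: once a position is filled it always holds a genuine argument, and if the run converges at that position, its limiting value lands in $B_{\Gamma'}$. So a position at which $\Gamma$ excises an argument, and which stays $\ast$ forever, must in $\Gamma'$ converge to some argument that already belongs to $B_\Gamma$ (a harmless duplicate); it cannot be ``parked'', and it cannot be churned indefinitely, since a non-converging position destroys convergence at every later position and collapses $B_{\Gamma'}$ to a finite set. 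Your sketch (push the dead argument one step along its $r$-chain, re-grow when $\Gamma$ reinstates it) never says what the dead position converges to nor why that value lies in $B_\Gamma$. The obvious implementation --- same enumeration, $r(a_i)=a_{i+1}$, relabel $\bot$ as $\ce$ --- fails: the $\ce$-test that could dislodge the substitute sitting at the dead position only sees the (short) prefix below that position, whereas the $\bot$-derivation that killed the corresponding argument in $\Gamma$ may need the larger context of its own, later position; so the substitute can stabilize on an argument outside $B_\Gamma$. Ensuring every killed position settles on a member of $B_\Gamma$ (or engineering a loop in the degenerate finite cases) is the real content of this half, and the proposal does not supply it.

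Second, in the separation half, your lemma that every dialectical $B_\Delta$ is $\omega$-c.e.\ is correct and nicely argued (a second excision at exact level $k$ requires a lower contraction in between, giving $c_k\le 1+\sum_{j<k}c_j\le 2^k$, and $a_n$ only ever occupies position $n$). But the diagonalization, as described, only defeats computable bounds on the number of changes of the \emph{canonical} approximation $\ran(\sigma^\Gamma_s)$. That does not show $B_\Gamma$ fails to be $\omega$-c.e., because $\omega$-c.e.-ness quantifies over \emph{all} computable approximations paired with computable bounds; a cleverer approximation could settle quickly even though yours oscillates wildly. You must diagonalize against pairs $(g_e,\psi_e)$: between successive togglings of the witness's membership you wait until $g_e$ agrees with the current value, and only then toggle again, so that either $g_e$ changes more than $\psi_e$ times or it fails to reach $B_\Gamma$ at the witness. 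This is a standard repair and is compatible with your engine/display mechanics (each requirement toggles at most $\psi_e+1$ times once $\psi_e$ converges, so looplessness survives), but as written the construction proves a weaker statement than non-$\omega$-c.e.-ness. Note finally that the cited source obtains this separation as a corollary of its analysis of completions of theories, so even once repaired your dichotomy is a different, arguably more self-contained, route than the one the citation points to.
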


% These results demonstrate that systems equipped with the operation of replacement (p-dialectical systems) are strictly more expressive than those limited to excision (dialectical systems). 
In \cite{dialectical3}, the following question was posed regarding the relative strength of q-dialectical and p-dialectical systems:

%This shows that systems equipped with the operation of replacement (p-dialectical systems) are strictly stronger than systems equipped with the operation of excision (dialectical systems).  In \cite{dialectical3}, the following question was posed regarding the relative strength of q-dialectical and p-dialectical systems:

\begin{question}[{\cite[Problem 2.3]{dialectical3}}]
   Is there a q-dialectical system $\Gamma$ such that $B_\Gamma$ is not the limiting belief set of any p-dialectical system?
    \end{question}
    
    In other words, does an iterated belief revision agent \emph{require} the notion of contradiction (i.e., the operation of excision), or is the concept of counterexample (i.e., the operation of replacement) alone sufficient?
    
    \smallskip

In the next section, we resolve this question by proving that q-dialectical systems are strictly more expressive than p-dialectical systems.

\section{Applications of Dialectical Systems}\label{sec:app total}

We describe natural applications of dialectical systems to problems arising in artificial intelligence, particularly in the management and evolution of knowledge under conditions of inconsistency and change.

\subsection{Repairing an Inconsistent Knowledge Base}\label{sec:app 1}

A fundamental application is to the case of an agent equipped with a knowledge base $K$ that may be internally inconsistent. Suppose further that $K$ is equipped with an entrenchment ordering reflecting the relative importance or reliability of its elements. As the agent draws inferences, contradictions or counterexamples may arise. In response, the agent applies excision or revision operations as specified by a corresponding (q-)dialectical system $\us$. In many classical models, inconsistency is handled purely syntactically and does not accommodate the notion of counterexamples. In such cases, $\us$ may be taken as a d-dialectical system. Thus, the dialectical framework provides a theoretical model for an algorithmic procedure to manage and repair inconsistent knowledge bases.

\subsection{Belief Revision}\label{sec: app 2}

In belief revision, a consistent knowledge base $K$ must be updated to incorporate a new belief $b$ given a Tarskian consequence operator $H^\infty$ and an entrenchment order on $K$, yielding a revised set $K'$ that includes $b$ and retains as much of $K$ as possible while preserving consistency. This is the central concern of the AGM framework. We model this phenomenon within the dialectical model, by defining a dialectical system $\us$ whose collection of arguments is $K$ with the given entrenchment ordering. We then define $H$ by setting $H(n,F)=H'(n,F\cup \{b\})\cap (K\cup \{\bot\})$ where $H'$ is an \aco limiting to $H^\infty$. The limiting belief set of $\us$ is a set $K'$ so that $K'$ preserves a subset of $K$ which is consistent with $b$. 
%Further, taking $<$ to be the lexicographic ordering of subsets of $K$ induced by the  entrenchment ordering on $K$, $K'$ is the $<$-maximal subset of $K$ which is consistent with $b$. 

In this setting, we consider a single new argument $b$ which we accept as externally given truth and we revise $K$ accordingly. Essentially the same method allows us to accept a computable sequence of externally given truths $(b_i)$. We simply define $H(n,F)=H'(n,F\cup \{b_i\mid i\leq n\})\cap (K\cup \{\bot\})$.

% To revise $K$ with $b$, we define a new dialectical system $\us'$ by inserting $r$ into the initial set $H(0, \emptyset)$ of the associated \aco. The resulting limiting belief set $K'$ of $\us'$ includes $b$ and a maximal consistent fragment of $K$ that is compatible with it. This provides a dynamic and algorithmic approach to belief revision consistent with dialectical reasoning.

% \subsection{Sequential Belief Revision}

% When presented with a computable sequence of new beliefs $(b_i)$, we consider the task of updating an initial consistent knowledge base $K$ to reflect this stream of information and maintain consistency with a given Tarskian consequence operator $H^\infty$. Standard methods perform iterated belief revision: incorporating $b_0$ into $K$, then updating the result with $b_1$, and so on. However, due to the undecidability of consistency, each of the sequential revision processes requires an infinite number of steps, and thus the process requires a transfinite process to limit towards a limiting belief set.

% In contrast, the dialectical framework accommodates iterated belief revision more naturally. As in \ref{sec: app 2}, we define an \aco $H$ so $H(i,F)=H'(i,F\cup \{b_j \mid j<i\})\cap (K\cup \{\bot\})$ where $H'$ is an \aco limiting to $H^\infty$, we define a dialectical system $\us'$ whose limiting belief set $K'$ is a fragment of $K$ compatible with the sequence $\{b_i \mid i \in \mathbb{N}\}$. This approach enables concurrent revision steps based on both the incoming stream of new beliefs and the evolving set of derived consequences.

\medskip

In each of these applications, dialectical systems offer \emph{a unifying, computable framework for dynamic belief management}. They support the reconciliation of inconsistency, enable rational belief revision, and handle iterative updates in a principled and theoretically grounded manner.

\section{The Main Theorem}\label{sec:main results}

We construct a q-dialectical system $\us$ with the aim of ensuring, via diagonalization, 
that $B_\us \neq B_{\them}$ for every p-dialectical system $\them$. To achieve this, we require a computable enumeration of all p-dialectical systems:

\begin{definition}
	We fix a computable collection $\A=\{a_n: n\in \nat\}$ of \axioms, and a computable bijection $\pi$ between the natural numbers and the elements of $\pfin(\A\cup \{\ce\})$.

Next,	for $i_0,i_1,i_2\in \nat$, we define a \emph{partial p-dialectical system} $(\A',H,r)$ for $(i_0,i_1,i_2)$ as follows:
	\begin{itemize}
		\item $\A'=\{g_n \mid n\in \nat\}$, where $g_n=a_{\varphi_{i_0}(n)}$;
		\item $H(s,X)=\bigcup_{t\leq s} \pi({\varphi_{i_1}(t,\pi^{-1}(X))})$;
		\item $r(g_n)=g_{\varphi_{i_2}(n)}$.
	\end{itemize}

	For any $m\in \nat$ where $m=2^{i_0}3^{i_1}5^{i_2}\cdot k$ where $2,3,5$ do not divide $k$, we let $\them_m$ be the above partial p-dialectical system for $(i_0,i_1,i_2)$.
\end{definition}

Note that for $j \leq 2$, the functions $\varphi_{i_j}$ may be partial, and the resulting operator $H^\infty$ may fail to satisfy the axioms of a Tarskian consequence operator. Consequently, $\them_m$ does not represent a valid p-dialectical system for every $m$. However, every p-dialectical system is represented by some $\them_m$ for an appropriate $m \in \mathbb{N}$.

\begin{theorem}\label{thm:Main}
    There is a loopless q-dialectical system $\us$ such that $B_\us$ is not the limiting belief set of any p-dialectical system.
\end{theorem}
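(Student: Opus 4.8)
The plan is to build $\us$ by a finite-injury priority construction that diagonalizes against each candidate $\them_m$. For each $m$ we want to meet a requirement
$$
\mathcal{R}_m:\quad B_\us \neq B_{\them_m},
$$
with the understanding that $\mathcal{R}_m$ is automatically satisfied if $\them_m$ fails to be a genuine p-dialectical system (e.g.\ if one of the $\varphi_{i_j}$ is partial, or its $H^\infty$ violates the Tarskian axioms, or $r$ is not acyclic, or the run is not loopless). So we only need to actively work against those $m$ for which $\them_m$ looks like a legitimate loopless p-dialectical system, and we may at each stage act on a finite approximation of that fact. The key structural asymmetry we will exploit is the one highlighted in the introduction: when $\us$ encounters a contradiction it may \emph{excise} an argument, permanently marking that position with $\ast$, whereas $\them_m$, being a p-dialectical system, can only ever \emph{replace} arguments via its acyclic function $r$ and so can never stop believing ``everything it ever placed'' at a position without putting something else there. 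This gives $\us$ a move that no $\them_m$ can mimic.

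Concretely, for each $m$ I would reserve a block of positions (coordinates in the run-string) dedicated to $\mathcal{R}_m$, and design the \aco $H$ of $\us$ so that, on the arguments placed in that block, $\us$ is free to stabilize to either (a) a specific argument $a$, or (b) the marker $\ast$ (i.e.\ no argument at all at that coordinate), depending on what $\them_m$ does. The strategy for $\mathcal{R}_m$ runs $\them_m$ in parallel, waits for $\them_m$'s run to appear to stabilize on the corresponding coordinate (with its \aco and $H^\infty$ behaving consistently with the axioms so far), and then forces $\us$ to do the opposite: if $\them_m$ settles with some argument $b$ believed at that coordinate, we make $\us$ produce $\bot$ against the argument currently in the block and excise it, so that coordinate of $\us$ limits to $\ast$ and $b \in B_{\them_m}\setminus B_\us$ (choosing the block-arguments so that no \emph{other} coordinate of $\us$ ever carries $b$); if instead $\them_m$ settles with $\ast$ at that coordinate (it has excised—wait, a p-dialectical system cannot excise), so the remaining case is that $\them_m$ keeps some argument there, which is the previous case, or $\them_m$'s coordinate never stabilizes, in which case $\them_m$ is not loopless and $\mathcal{R}_m$ is vacuous. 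The one genuinely subtle point is arranging the argument pool and $H$ so that $\us$ remains loopless overall: the excisions and expansions $\us$ performs must be engineered so that every coordinate of $\us$'s run does reach a limit, which is exactly where the finite-injury bookkeeping is needed—higher-priority requirements may force re-excisions in a lower-priority block only finitely often.

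The main obstacle, as usual with finite-injury arguments in this setting, is the \emph{verification that $\us$ is loopless} together with the honesty of the approximation: we must show that for each coordinate, after finitely many stages no requirement acts on it again, so $\lim_s \sigma^\us_s(n)$ exists. This requires (i) showing each $\mathcal{R}_m$ acts only finitely often—it acts at most once if $\them_m$ is legitimate and once its behavior on the relevant coordinate has stabilized, and never if $\them_m$ is illegitimate—and (ii) checking that when $\us$ produces $\bot$ via its \aco $H$, the resulting $H^\infty$ genuinely satisfies Monotony, Inclusion and Iteration, i.e.\ that the ``diagonalizing contradiction'' we inject is consistent with the Tarskian axioms. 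Point (ii) is handled by making the arguments in distinct requirement-blocks logically independent under $H$ (so $H^\infty$ is essentially a disjoint union of simple local operators), and by only ever putting $\bot$ into $H^\infty(F)$ in a monotone, inclusion-respecting way—e.g.\ $\bot \in H(s,F)$ exactly when $F$ contains a designated ``bad'' argument for some block whose strategy has decided to diagonalize. A secondary subtlety is the interaction with $\them_m$'s own replacements $r$: since $r$ is acyclic, $\them_m$'s coordinate passes through only finitely many arguments before (if loopless) stabilizing, so our ``wait for apparent stabilization'' step is legitimate in the limit even though at finite stages we may guess wrongly and be injured—each wrong guess costs only a bounded amount of rework in the block, preserving looplessness of $\us$.
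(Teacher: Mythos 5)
There is a genuine gap at the heart of your diagonalization. Your key lever is the claim that, since a p-dialectical system cannot excise, it ``can never stop believing'' an \axiom without your being able to exploit this: you wait for $\them_m$ to (apparently) settle on believing some \axiom $b$ and then excise $b$ from $\us$, concluding $b\in B_{\them_m}\setminus B_\us$. But a p-dialectical system can perfectly well stop believing $b$: its own \aco may later produce $\ce$ on a set containing $b$, whereupon $b$ is \emph{replaced} by $r^{\them_m}(b)$, and nothing prevents the replacement value from being an \axiom that is already believed (or that occurs elsewhere in its run), so that its limiting belief set continues to track $B_\us$. After your single action the opponent answers, the belief sets are equal again, and your strategy has no further move; if you let it act again on a new target the same exchange can repeat indefinitely, which simultaneously breaks your ``acts at most once after stabilization'' claim, the finite-injury accounting, and the requirement itself. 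This is exactly why the problem was open: mere removal of \axioms is not a capability that distinguishes q- from p-systems. The paper's proof extracts the real asymmetry in two rounds: first it forces an \emph{order} mismatch, reading off the opponent's replacement function (the \texttt{PredictOrder} computation) to decide whether to issue $\bot$ or $\ce$ on a fresh \axiom $a_N$, so that the designated pair $a_{N+1},a_{N+2}$ ends up in opposite orders in $\sigma^\us_t$ and $\sigma^{\them_m}_t$; then it springs a trap: it puts $\bot$ on the pair so the one the opponent lists first, $a_I$, is excised from $\us$; if $\them_m$ keeps up it must derive $\ce$ against $a_I$, at which point $\us$ instead removes $a_J$, so $a_I$ re-enters $B_\us$ while $\them_m$ cannot restore $a_I$ without deriving a counterexample from \axioms that remain in $B_\us$. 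Nothing in your proposal plays the role of this second round, and without it the excision move is answerable.

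Two secondary problems compound this. First, your strategy is phrased coordinate-wise (``the corresponding coordinate''), but limiting belief sets are just sets: $\them_m$ enumerates its own stream $g_n=a_{\varphi_{i_0}(n)}$ in its own order, positions in the two runs do not correspond, and an \axiom dropped at one position can survive at another. Second, the run of $\us$ does not decompose into independent blocks: a contradiction or counterexample triggers contraction, which truncates the entire tail of $\sigma^\us_s$, so actions for one requirement disturb all later positions. The paper neutralizes this by having every injected $\bot$ or $\ce$ include the set $S$ of all smaller-indexed \axioms not already scheduled for removal (so the injected consequence only fires at the intended position and becomes moot if a higher-priority strategy acts), and its Lemmas~\ref{lem:hands off}--\ref{lem:weWin} verify precisely this bookkeeping; your ``logically independent blocks'' device does not address it. Finally, the remark that non-loopless or otherwise defective $\them_m$ are ``vacuous'' also needs an argument (a non-loopless p-system still has a limiting belief set), which the paper supplies case by case in the verification.
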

\begin{proof}
    We fix $\A=\{a_i \mid i\in \nat\}$ and aim to define a q-dialectical system $\us = (\A, H, r)$ by specifying computable functions $H$ and $r$. In the ``construction'' phase of the proof, we explicitly describe the algorithms for computing these functions. At each stage $s$, an instruction to ``put $a$ into $H(s,F)$'' means that we set $H(s,F):=H(s-1,F)\cup \{a\}$.

 The core idea of the proof is to diagonalize against the possibility that $B_\us=B_{\them_i}$ for any partial p-dialectical system $\them_i$.  To this end, we introduce an infinite collection of strategies $\Strat_i$, one for each $i\in \nat$. Each strategy $\Strat_i$ will act as a module executed by the algorithm constructing $\us$, with the specific goal of ensuring that $B_\us \neq B_{\them_i}$.
 
To clarify the overall construction, we begin by informally describing how we might construct a q-dialectical system to satisfy a single strategy, $\Strat_0$, whose task is to guarantee that $B_\us \neq B_{\them_0}$. For simplicity, in the next subsection, we let $\them := \them_0$. 
        
 %   We will first informally describe a single strategy $\strat_0$ to ensure $B_\us\neq B_{\them_0}$. For this description, we focus on only one $\them$, so we let $\them=\them_0$.

    \subsection*{An informal description of one strategy to ensure $B_\us\neq B_\them$.}
    This strategy consists of two parts. \textbf{Part 1} identifies pair of \axioms $a_i,a_j$ and stage $s$ so that $a_i,a_j$ appear in opposite orders in $\sigma_s^\us$ and $\sigma_s^\them$.
	\textbf{Part 2} then exploits this mismatch to ensure that $B_{\us}\cap\{a_i,a_j\}\neq  B_{\them}\cap\{a_i,a_j\}$.

 \subsubsection*{Part 1:} We fix the \axioms $a_0, a_1, a_2$ to be used by this strategy and define $r(a_0)=a_2$. Then we wait  for a stage at which we find $l<m<n$ such that $\{g_l,g_m,g_n\}=\{a_0,a_1,a_2\}$ and  $\sigma_s^\them$ has length greater than $>n$. 
    %and so that $\ran(\sigma^\them_s\restrict_n+1)\subseteq \ran(\sigma_s^\us)$. 
  We also wait until $r^\them(g_l)$ is obtained.  
  Note that we are currently building $B_\us=\A$: i.e.,  our \aco does not produce any contradictions or counterexamples. If, during this waiting period, $\them$ produces  any counterexample, then some \axiom will leave $B_\them$ while remaining in $B_\us$, and this will give us  a trivial victory for ensuring $B_\them \neq B_\us$.

Assuming such a stage is reached, we now distinguish cases. If the order of $a_0,a_1,a_2$ in $\sigma^\them_s$ is different than the order in $\sigma^\us_s$, then we are already done with Part 1. Thus, we may assume $g_l=a_0$, $g_m=a_1$, and $g_n=a_2$. Our action now depends on the value of $r^\them(g_l)$. 

    \begin{itemize}
        \item \emph{Case 1: $r^\them(g_l)=a_1$}\\ We put $\ce$ into $H(s,\{a_0\})$. This triggers replacement of $a_0$ with $a_2$ in $\sigma^\us_{s+1}$, placing $a_2$ before $a_1$. In $\sigma^\them_t$ for $t>s$, to let $a_0\notin B_\them$, the system must replace $a_0$ with $a_1$, thus putting $a_1$ before $a_2$. Hence, the ordering of $a_1$ and $a_2$ will differ between $\sigma^\them_t$ and $\sigma^\us_t$ at some stage $t>s$.

        \item \emph{Case 2: $r^\them(g_l)=a_2$}\\ We put $\bot$ into $H(s,\{a_0\})$. This excises $a_0$ in $\sigma^\us_{s+1}$, leaving $a_1$ before $a_2$ in $\sigma^\us_t$ for each $t>s$. In $\sigma^\them_t$ for $t>s$, to let $a_0\notin B_\them$, the system must  replace it with $a_2$, yielding $a_2$ before $a_1$---again, a mismatch.

        \item \emph{Case 3: $r^\them(g_l)=g_i$, for some $g_i\notin \{a_1,a_2\}$}\\
        We ensure that $g_i$ remains in $B_\them$, so that this \axiom will not be replaced in $\sigma^\them_t$ for any $t>s$. Then, we put $\ce$ into $H(s,\{a_0\})$, replacing $a_0$ with $a_2$ in $\sigma^\us_{s+1}$. This puts $a_2$ before $a_1$ in $\sigma^\us_t$ for each $t>s$, while in $\sigma^\them_t$, $a_1$ will precede $a_2$---again, a mismatch.
    \end{itemize}

    At this point, we have transitioned from building  $B_{\us}=\A$ to $B_\us=\A\smallsetminus \{a_0\}$. Thus, for $B_\them$ to equal $B_\us$, $\them$ can only derive $\ce$ in a way that removes $a_0$.  Any additional replacement will necessarily cause $B_\them \neq B_\us$. 
     In this way, we ensure that once $\sigma^\them_t$ grows large enough, we will see the intended order difference. (Note: We will have to be more careful when we discuss the full construction with many simultaneously running modules, each working towards different strategies). 

\subsubsection*{Part 2:}  Now that we have identified a pair of \axioms $a_i$ and $a_j$ that appear in opposite orders in $\sigma^\them_t$ and $\sigma^\us_t$, we force a difference in the corresponding limiting belief sets.

Suppose that $a_i$ appears first in $\sigma^\them_t$ and $a_j$ appears first in $\sigma^\us_t$. We add $\bot$ to $H(t+1,\{a_i,a_j\})$. This causes $a_i$ to leave the range of $\sigma^\us_{t+1}$.

If $B_\them = B_\us$ were to hold, then at some later stage, $\them$ must derive $\ce$ on $a_i$, i.e., insert $\ce$ into $H(t', \sigma^\them_t \restrict_{(k+1)})$ where $\sigma^\them_t(k) = a_i$.  If this happens, we will finally put $\bot$ into $H(x+1,\{a_j\})$ ensuring that $a_j$ will not be in $B_{\us}$. After this action,  $a_i$ re-enters $\sigma^\us_y$, since its only conflict was with $a_j$, and $a_j$ has been removed. However, $a_i$ cannot re-enter $\sigma_y^{\them}$ without $\them$ removing some other \axiom that still belongs to $B_\us$, again guaranteeing $B_\us \neq B_\them$.

\medskip

We now discuss how to fit together multiple strategies, and then give a more formal algorithm for each strategy.

\subsection*{Fitting multiple strategies together.}

A central feature of our strategy for ensuring $B_\us \neq B_\them$ is maintaining $B_\us$ sufficiently large so that few \axioms in $\sigma^\them_s$ are candidates for replacement. In particular, we must prevent any element $\sigma^\them_s(j)$ for $j \in [0, m]$ from being replaced by $a_2$, which would otherwise reverse the intended order of $a_1$ and $a_2$ in $\sigma^\them_t$. 
We accomplish this by ensuring that all \axioms in $\ran(\sigma^\them_s \restrict_{(n+1)}) \smallsetminus \{a_0\}$ remain in $B_\us$.

When multiple strategies $\strat_i$ run in parallel, no single strategy can simply assume that $B_\us = \A$ unless it chooses to add a contradiction or counterexample to $H$.
% nor even that $B_\us = \A \setminus {a_0}$, since $\strat_0$ may remove more than one \axiom. 
We can also not wait for $\strat_0$ to finish before beginning $\strat_1$, since $\strat_0$ might remain indefinitely waiting for $\them_0$ to exhibit some  specific behavior. Despite this, other strategies $\strat_j$ for $j > 0$ must still be able to proceed and succeed. That means they must be allowed to introduce their own consequences into $H$---involving contradictions or counterexamples---without interfering with the goals of $\strat_0$. Ensuring this kind of mutual coherence between strategies is essential to the success of the overall construction.

To manage these interactions, we employ a classical method from computability theory: the \emph{finite-injury priority method}, introduced independently by Friedberg \cite{friedberg1957two} and Muchnik \cite{muchnik1956negative}. The core idea is to resolve conflicts between strategies by prioritizing the needs of $\strat_i$ over the needs of $\strat_j$ when $i<j$. This ensures that each strategy $\strat_i$, being only \emph{injured} (i.e., reset) by the finitely many higher-priority strategies $\strat_j$ for $j<i$, eventually will be allowed to stabilize and succeed.

In a typical finite-injury construction, whenever a strategy $\strat_i$ acts, all lower-priority strategies, i.e., $\strat_j$ with $j > i$, are deactivated. These lower-priority strategies must then restart from scratch. Since each $\strat_i$ acts only finitely often, every lower-priority strategy will eventually face no further interference and will complete its task.

A fundamental principle of the priority method is that lower-priority strategies must never cause any harm to higher-priority ones. Accordingly, when defining our strategies in the next subsection, we will carefully design each one so that its actions are harmless to those of higher priority.

Concretely, suppose strategy $\strat_i$ intends to put, say, $\ce$ into $H(t, F)$, for some finite set of arguments $F$. To avoid harming higher-priority strategies, it will ensure that $F$ contains any set $X$ that a higher-priority strategy might later use to derive a contradiction or a counterexample. This guarantees that, should the higher-priority strategy act, the counterexample based on $F$ becomes moot, and no conflict arises.

We now explicitly describe the algorithm performed by each strategy $\strat_i$.

    \subsubsection*{Part 1 of $\strat_i$:}\hfill
    
    \smallskip
    
    \textbf{Step 1}: We fix three fresh \axioms $a_N,a_{N+1},a_{N+2}$ such that $N$ is larger than any number ever mentioned in the construction before. We define the parameter $Z_i=\emptyset$.
    Let $S$ be the set of all arguments $a_k$ with $k<N$ which are not currently in  $\bigcup_{j<i} Z_j$.

 \emph{Comment}: If a higher-priority strategy (for $\them_k$ with $k<i$) has found his $g_l,g_m,g_n$ and considered $\sigma^{\them_k}_s$, then $N$ is larger than any $m$ so that $a_m\in \ran(\sigma^{\them_k}_s)$. By choosing this $N$ large, the strategy for $\them_i$ is making an effort to ensure that its actions will not affect any higher-priority strategy. As $N$ is  larger than the index of any mentioned \axiom, we have that for every finite set $F$, $H(s,F)=H(s,F\cup \{a_N,a_{N+1},a_{N+2}\})$. 
The parameter $Z_i$ keeps track of which elements this strategy  wants to remove from $B_\us$. Thus, the set $S$ includes all \axioms which a higher-priority strategy may, in the future, remove from $\ran(\sigma^\us_s)$. As described above, $S$ will be included in all of our derivations of contradictions or counterexamples.

\smallskip

    \textbf{Step 2}: \texttt{Wait} to see a stage $s$ at which we find $l<m<n$ with $\{g_l,g_m,g_n\}=\{a_N,a_{N+1},a_{N+2}\}$ and for $\sigma^{\them_i}_s$ to have length $>n$. Also, wait for $r^{\them_i}(g_l)$ to be defined.

\smallskip

    \textbf{Step 3}: We now act based on cases.
  
  \begin{itemize}
  \item 
     If $g_l \neq a_N$, then we have already ensured success in \textbf{Part 1}. Go to \textbf{Part 2} and deactivate all lower-priority requirements;
  \item If instead $g_l = a_N$, call the module \texttt{PredictOrder}$(\them_i, s, a_N, a_{N+1}, a_{N+2})$.
\end{itemize}

 \emph{Comment}: The goal here is to apply either excision or replacement to $a_N$ in order to force a specific order between $a_{N+1}$ and $a_{N+2}$ in $\sigma_t^\us$ that diverges from the order in $\sigma_t^{\them_i}$. However, a subtlety arises. Even if $a_{N+1}$ appears before $a_{N+2}$ in $\sigma_s^{\them_i}$, this order may change in future stages due to interactions with other \axioms.  For example, suppose an \axiom $c$, positioned between $a_N$ and $a_{N+2}$ in $\sigma_s^{\them_i}$, was replaced by $a_{N+1}$ as a result of a counterexample derived from the set $\{a_N, c\}$. If $a_N$ is later removed (through replacement or excision), then $c$ may reappear, and the original derivation of $\ce$ from ${a_N, c}$ will no longer apply. As a result, $a_{N+1}$ may not re-enter the string in place of $c$. Consequently, the eventual ordering of $a_{N+1}$ and $a_{N+2}$ in some future stage $\sigma_t^{\them_i}$ may differ from their ordering in $\sigma_s^{\them_i}$. 
  To address this, the module \texttt{PredictOrder}$(\them_i, s, a_N, a_{N+1}, a_{N+2})$, described below, returns a string $\rho$ representing a sufficiently long initial segment of $\sigma_t^{\them_i}$, computed at a suitably late stage $t$. The purpose of $\rho$ is to determine which of $a_{N+1}$ or $a_{N+2}$  appears first at this later stage: that is, if the last entry of $\rho$ is $a_K$ ($K\in \{N+1,N+2\}$), we will know that $a_{K}$ will come first in $\sigma^{\them_i}_t$.% at $t$.

 \smallskip

    \textbf{Step 4}: We perform the following action depending on the last entry of $\rho:=$ \texttt{PredictOrder}$(\them_i,s,a_N,a_{N+1},a_{N+2})$:

    \begin{itemize}
        \item \emph{Case 1:  $\rho$ ends in $a_{N+1}$}\\ We put $\ce$ into $H(s,S\cup \{a_N\})$ and let $Z_i=\{a_N\}$. 
        
%        Comment: With this, the run will perform replacement, replacing $a_N$ with $a_{N+2}$, thus putting $a_{N+2}$ before $a_{N+1}$ in $\sigma^\us_t$, but we will argue in Lemma \ref{lem:PredictOrder is right} below that either the strategy is successful or $a_{N+1}$ will appear before $a_{N+2}$ in $\sigma^\them_t$ for some stage $t$.

        \item \emph{Case 2: $\rho$ ends in $a_{N+2}$}.\\ We put $\bot$ into $H(s,S\cup \{a_N\})$ and let $Z_i=\{a_N\}$.

        % \item Case 3 $r^\them(g_l)=g_i$ for some $g_i\notin \{a_{N+1},a_{N+2}\}$. We have now officially mentioned $g_i$.  Then we put $\ce$ into $H(s,S\cup \{a_N\})$. This will perform replacement, replacing $a_N$ with $a_{N+2}$, thus putting $a_{N+2}$ before $a_{N+1}$ in $\sigma^\us_t$. We will argue in Lemma \ref{lem:Don't let shit roll uphill} that this will be enough to ensure that unless a higher-priority strategy acts, $g_i$ will appear in $B_\us$, and in Lemma \ref{} that either the requirement is satisfied or there will be a stage $t$ at which will have $a_{N+1}$ will appear before $a_{N+2}$ in $\sigma^\them_t$.
    \end{itemize}

\emph{Comment}:  These actions enforce a specific ordering in $\sigma_t^\us$: either replacement (putting $a_{N+2}$ before $a_{N+1}$) or excision (leaving $a_{N+1}$ before $a_{N+2}$). In Lemma~\ref{lem:PredictOrder is right}, we will prove that either the strategy succeeds or the ordering predicted by $\rho$ will persist in $\sigma_t^{\them_i}$ at some future stage $t$.

After performing the appropriate action, deactivate all lower-priority strategies and proceed to \textbf{Part 2}.

    \subsubsection*{Part 2 of $\strat_i$:}\hfill
    
    \smallskip

    \textbf{Step 5}: \texttt{Wait} for a stage $t$ so that $\rho\preceq \sigma^{\them_i}_t$. 
    
    \emph{Comment}: We will show in Lemma \ref{lem:PredictOrder is right} below that if no such stage $t$ exists, then $\strat_i$ still succeeds. This may occur because, e.g., $\them_i$ removes too many \axioms to match $B_\us$, or because $\them_i$ is not a valid p-dialectical system.
  
  \smallskip

    \textbf{Step 6}: Let $\{a_I,a_J\}=\{a_{N+1},a_{N+2}\}$ be so that the last entry of $\rho$ is $a_I$. We add $\bot$ to $H(t,S\cup \{a_I,a_J\})$ and update $Z_i:=\{a_N,a_I\}$.
    
    \emph{Comment}: This action triggers excision in the run of $\us$, removing $a_I$ from $\sigma_t^\us$. All lower-priority strategies are deactivated. As we will show in Lemma~\ref{lem:hands off}, this ensures that (assuming no further interference from an higher-priority strategy) the limiting belief set satisfies $\{a_I, a_J\} \cap B_\us = \{a_J\}$. Thus, either $B_\us \neq B_{\them_i}$ holds already, or a future stage $t'$ will witness the derivation $\ce \in H^{\them_i}(t', \rho)$.
    
  \smallskip
    
    \textbf{Step 7}: \texttt{Wait} for a $t'$ where $\ce\in H^{\them_i}(t',\rho)$.

\smallskip

    \textbf{Step 8}: Put $\bot$ into $H(x,S\cup \{a_J\})$, where $x$ is the current stage, and update $Z_i:=\{a_N,a_J\}$. Deactivate all lower-priority requirements, and declare $\strat_i$  complete.

    \emph{Comment}: This final step ensures  that $\{a_I, a_J\} \cap B_\us = \{a_I\}$. At the same time, the fact that $\ce\in H^{\them_i}(t',\rho)$ ensured that $a_I\notin B_\them$. Hence, $B_\us \neq B_{\them_i}$, completing the diagonalization.

    % At this point, we declare the strategy completed. We argue in Lemma \ref{lem:full strategies succeed} that unless a higher-priority strategy re-starts this strategy, being completed will ensure that $\{a_i,a_j\}\cap B_\us=\{a_i\}$, and that $B_\us\neq B_\them$.
    
\subsubsection*{The \texttt{PredictOrder} Module:}

\textbf{Step PO1:} Define $E$ to be $\{a_N\}\cup \bigcup_{j<i} Z_j$. 

\smallskip

\textbf{Step PO2:} WAIT for a stage $t$ where we see, for each $j<n$, enough convergences of $r^{\them_i}$ iterated on $g_j$ enough times that $(r^{\them_i})^e(g_j)\notin E$.

\emph{Comment}: Recall that if $\them_i$ is in fact a p-dialectical system, then $r^{\them_i}$ is an acyclic recursive function, so at some stage we should see each of these required convergences. 

\smallskip

\textbf{Step PO3}: Let $\tau$ be the string so $\tau(j)=(r^{\them_i})^{e_j}(g_j)$ where $e_j$ is least so that $(r^{\them_i})^{e_j}(g_j)\notin E$. 
Return the smallest prefix $\rho$ of $\tau$ containing either $a_{N+1}$ or $a_{N+2}$. 

\emph{Comment}: Note that $a_{N+1},a_{N+2}\notin E$, so $\tau(m)=g_m\in \{a_{N+1},a_{N+2}\}$, thus $\rho$ is well-defined as has length no more than $m$.

% Return ``$a_{N+1}<a_{N+2}$'' if $a_{N+1}$ appears before $a_{N+2}$ in $\tau$, ``$a_{N+2}<a_{N+1}$'' if $a_{N+2}$ appears before $a_{N+1}$ in $\tau$. Note that as a consequence of performing this computation, we have mentioned every $\tau(j)$ and each of the numbers $(r^{\them_i})^{\circ o'}(g_j)$ for $j<n$ and $o'\leq o_j$.

\smallskip

\noindent\textbf{Overarching program running each strategy:}
At stage 0 of the construction, all strategies are deactivated. 

At any given stage $s>0$, there may be some activated strategies and some deactivated strategies. If any of the activated strategies want to act (i.e., some \texttt{Wait} condition has been satisfied), we let the highest-priority strategy act according to the description of the strategies above. This deactivates all lower-priority strategies.

If none of the active strategies want to act, then we let $\them_k$ be the highest-priority strategy which is not yet activated, and we activate it. This means that it performs its first step of choosing new numbers $a_{N},a_{N+1},a_{N+2}$ and sets its \texttt{Wait} condition determining if it will later want to act again.

We also take the least $k$ so that $r(a_k)$ is not yet defined and let it equal $a_{k+1}$.

Comment: We note that we only ever put $\ce$ into $H(t,F)$ for any $t\in \nat$ and $F\in \pfin(\A)$ in Step 4 of some $\strat_j$. In this case, we use $r(a_N)$, which is defined in Step 1 of the same $\strat_j$ (recall $N$ was new, so $r(a_N)$ was not already defined when $a_N$ was chosen). Thus, we include this definition for $r$ in the construction in order that $r$ be a total function, but we will never perform replacement based on any of these values of $r$.

This completes the description of the construction of the q-dialectical system $\us$. We now shift to verifying the result that $B_\us\neq B_{\them_i}$ for every $i$.

\smallskip

\noindent\textbf{Verification:}
We proceed with a series of Lemmas which ensure that our construction does in fact produce a q-dialectical system $\us$ so that $B_\us\neq B_{\them_i}$ for each p-dialectical system $\them_i$.

Note that a strategy that is never deactivated after stage $s$ must have one of five possible outcomes: It could forever wait in Steps 2, 5, 7, PO2, or it may successfully get to Step 8 and complete the strategy. The following Lemmas, whose proofs are in Appendix \ref{sec:appx-verification} due to space constraints, follow via a careful analysis of the outcomes in each of these cases, along with a careful induction on the parameters of the strategies.

\begin{restatable}{lemma}{lemmahandsoff}\label{lem:hands off}
		    Suppose a strategy $\strat_i$ is activated at stage $s$ and is never deactivated after stage $s$. Let $N$ be the parameter chosen by $\strat_i$. Then $B_\us\cap \{a_i \mid i<N\} = \{a_i \mid i<N\}\smallsetminus \bigcup_{j<i} Z_j$. (Note that the value of $Z_j$ cannot change after stage $s$ since the higher-priority strategies do not act after stage $s$.)
\end{restatable}

%\vspace{-.1in}

\begin{restatable}{lemma}{lemmaEsAreRight}\label{lem: Es are right}
	Suppose a module \texttt{PredictOrder}$(\them_i,s,a_N,a_{N+1},a_{N+2})$ returns a value $\rho$ at stage $s$. Let $E$ be the set $E$ in the computation of \\ \texttt{PredictOrder}$(\them_i,s,a_N,a_{N+1},a_{N+2})$. Let $M$ be the largest number mentioned in the construction by stage $s$. Then either a higher-priority strategy than $\strat_i$ acts or $B_\us\cap( \{a_j \mid j\leq M\}\smallsetminus \{a_{N+1},a_{N+2}\}) = \{a_j \mid j\leq M\}\smallsetminus (\{a_{N+1},a_{N+2}\}\cup E)$.
\end{restatable}
%\vspace{-.1in}

\begin{restatable}{lemma}{lemmaPredictOrderIsRight}\label{lem:PredictOrder is right}
	Suppose that a strategy $\strat_i$ is not deactivated after stage $s$, and that \texttt{PredictOrder}$(\them_i,s,a_N,a_{N+1},a_{N+2})$ returns a value $\rho$. Then  either $B_\us\neq B_{\them_i}$ or at some stage $t$, we have $\rho\preceq \sigma^{\them_i}_t$. 
	
	Also, either $B_\us\neq B_{\them_i}$ or at all large enough stages, we have $\rho\restriction_{\vert \rho\vert -1} \preceq \sigma^{\them_i}_t$.
	
\end{restatable}

The following is standard in the finite-injury priority method.
\begin{restatable}{lemma}{lemmaStopInjury}\label{lem:stopInjury}
	Each strategy is deactivated at only finitely many stages.
\end{restatable}
%\vspace{-.1in}
\begin{restatable}{lemma}{lemmaWeWin}\label{lem:weWin}
	For each $i$, $B_\us\neq B_{\them_i}$.
\end{restatable}
This completes the proof of Theorem \ref{thm:Main}
\end{proof}

\section{Discussion}\label{sec:discussion}

This work demonstrates that 
q-dialectical systems—--those that revise beliefs in response to both contradictions and counterexamples--—are strictly more powerful than 
p-dialectical systems, which rely solely on counterexamples. This result clarifies an important theoretical distinction: Reasoning with both contradictions and counterexamples and responding accordingly introduces a form of reasoning that cannot be replicated by counterexamples alone.

In terms of belief dynamics, this distinction maps naturally onto two different operations: excision, the removal of beliefs that lead to contradictions, and replacement, the refinement of beliefs in light of counterexamples. 
Our results thus highlight that both operations are essential for a fully expressive and robust model of rational belief change. Considering the two connections from Section \ref{sec:app total}, we raise the question of how q-dialectical or p-dialectical systems can be used in the two settings of repairing inconsistent knowledge bases, and belief revision. 

These findings deepen our theoretical understanding of how mathematicians or research communities evolve their beliefs. The practice of mathematical inquiry does not rely exclusively on refining conjectures in response to failed examples; it also depends crucially on recognizing when a contradiction signals the need for more fundamental revision. By modeling both forms of reasoning, 
q-dialectical systems more accurately reflect the dual mechanisms driving knowledge development in such domains.

From an artificial intelligence perspective, the implications are similarly significant. An adaptive agent that relies only on counterexample-driven revision may miss critical inconsistencies in its belief state, while one that responds only to contradictions may fail to adjust to new, refining evidence. Our results support the view that both counterexample-based and contradiction-based reasoning are necessary components of intelligent belief management. The integration of both mechanisms, as formalized in 
q-dialectical systems, provides a pathway toward more general and effective approaches to automated reasoning.

\begin{credits}
\subsubsection{\ackname} This work was supported by the the National Science Foundation under Grant DMS-2348792. San Mauro is a member of INDAM-GNSAGA.

\subsubsection{\discintname}
The authors have no competing interests to declare that are
relevant to the content of this article.
\end{credits}

%\printbibliography

\bibliographystyle{splncs04}
%\bibliography{biblio.bib}

\newpage
\appendix

\section{Equivalence of our presentation with the prior definition of q-dialectical systems}\label{Appendix A}

We now give the definition of a q-dialectical system from \cite{dialectical1}, and we then verify that our definition gives an equivalent run.

\subsection{The original definition of q-dialectical systems}

This section verifies that our definition is equivalent to the classical one. Although the argument is somewhat technical, we believe the effort involved in following it only reinforces the value of our more accessible approach.

\begin{definition}\label{def:q-original}
	A q-dialectical system is a quintuple $q=\langle H^\infty, f, f^-, c, c^-\rangle$ such that 
	\begin{itemize}
		\item $H^\infty$ is an enumeration operator so that $H^\infty(\emptyset)\neq \emptyset$, $H^\infty(\{c\})=\nat$, and $H^\infty$ is a Tarskian consequence operator. 
		\item $f$ is a computable permutation of $\nat$. Notationally, we refer to $f(i)$ as $f_i$, representing the $i$th argument in our consideration.
		\item $f^-$ is an acyclic computable function from $\nat$ to $\nat$.
		\item $c^-\notin \ran(f^-)$. 
	\end{itemize}
\end{definition}

We note that, despite the definition referring to $H^\infty$, a run of the system is defined in terms of a computable approximation $H$ to $H^\infty$. An enumeration operator (see \cite[\S 9.7]{rogers1987theory}) $H^\infty$ is a c.e.\ set of pairs $\langle x , F\rangle$, with $x\in \nat$ and $F\in \pfin(\nat)$. We then say $x\in H^\infty(Y)$ if and only if there is some finite $F\subseteq Y$ so that $\langle x , F\rangle\in H$. A computable approximation to $H^\infty$ is a computable sequence $H_s$ so that $H_0=\emptyset$, $H_i\subseteq H_{i+1}$, each $H_i$ is finite, and $\bigcup_i H_i=H^\infty$. Also for each $H_i$, we let $H_i(Y)=\{x\mid \exists F\subseteq Y, \langle x,F\rangle\in H_i\}$.

Now we define the run of a q-dialectical system given a computable approximation $H_i$ to $H^\infty$. Several values are defined by recursion for stages $s$:
$A_s$ (a finite current belief set), $r_s(x)$ is a finite string of numbers for each $x$, viewed as a vertical stack, $p(s)$ (the greatest number $m$ so that $r_s(m)\neq \langle \rangle$)), where $\langle\rangle$ represents the empty sequence, $h(s)$ (a number). There are further derived parameters $\rho_s(x)$ is the last element in the sequence $r_s(x)$ if $r_s(x)$ is non-empty. $\rho_s(x)$ is undefined if $r_s(x)=\langle\rangle$. $L_s(x)=\{\rho_s(y)\mid y<x \wedge r_s(y)\neq \langle\rangle\}$, and for every $i$, define $\chi_s(i)=H_s(L_s(i))$.

A run of $q$ is then defined by the following recursion\footnote{We correct an off-by-one error in the original source.}:

Stage $0$: Define $p(0)=h(0)=0$. $A_0=\emptyset$ 

$$
r_0(x)=\begin{cases}
			\langle f_0 \rangle, & \text{if $x=0$}\\
            \langle\rangle, & \text{otherwise}
		 \end{cases}
$$

Stage $s+1$:  Assume $p(s)=m$. We distinguish cases: 
\begin{enumerate}
	\item For each $z\leq m$, $\{c,c^-\}\cap \chi_s(z)= \emptyset$ (i.e., we see no current need for revision). Let $p(s+1)=m+1$, and define

    $$
    r_{s+1}(x)=\begin{cases}
                    r_s(x), & \text{if $x\leq m$}\\
    			\langle f_{m+1} \rangle, & \text{if $x=m+1$}\\
                \langle\rangle, & \text{if $x>m$}
    		 \end{cases}
    $$

	\item There exists $z\leq m$ such that $c\in \chi_s(z)$, and for all $z'<z$, $c,c^-\notin \chi_s(z)$. In this case, let $p(s+1)=z$ and define

    $$
    r_{s+1}(x)=\begin{cases}
                    r_s(x), & \text{if $x< z-1$}\\
    			\langle f_{z} \rangle, & \text{if $x=z$}\\
                \langle\rangle, & \text{if $x=z-1$ or $x>z$}
    		 \end{cases}
    $$
    
    \item There exists $z\leq m$ such that $c^-\in \chi_s(z)$, for all $z'<z$, $c^-\notin \chi_s(z)$, and $c\notin \chi_s(z)$. Let $f_y=\rho_s(z)$. In this case, let $p(s+1)=z$ and define
    
        $$
        r_{s+1}(x)=\begin{cases}
                        r_s(x), & \text{if $x< z-1$}\\
                        r_s(x)^\smallfrown \langle f^-(f_y)\rangle, & \text{if $x= z-1$}\\
        			\langle f_{z} \rangle, & \text{if $x=z$}\\
                    \langle\rangle, & \text{if $x>z$}
        		 \end{cases}
        $$
\end{enumerate}

Finally, define $h(s+1)=p(s+1)$ if Clause 1 applies, otherwise $h(s+1)=p(s+1)-1$, and let $$A_{s+1}=\bigcup_{i<h(s+1)} \chi_{s+1}(i).$$

The set $A_{s}$ is the \emph{provisional theses} at stage $s$. The set $A^H_q$ defined as
$$A^H_q:= \{f_x \mid \exists t \forall s>t f_x\in A_{s}\}$$ is the set of final theses of $q$.

\begin{definition}
    We say that $q$ is loopless if, for each $y\in \nat$, $\{\rho_s(y)\mid s\in \nat\}$ is finite.
\end{definition}

\subsection{Translating between our definition and the original}

For the sake of clarity in this section, and only in this section, we will refer to the q-dialectical systems as presented in Definition \ref{def:q-ours} as q'-dialectical systems, and the q-dialectical systems as presented in Definition \ref{def:q-original} as q-dialectical systems. As we show the equivalence of these notions, we have no need for this notation elsewhere.

\begin{theorem}
    If $q$ is a loopless q-dialectical system, then there exists a q'-dialectical system $\us$ so that $A^H_q=B_\us$. 
\end{theorem}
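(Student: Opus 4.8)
The plan is to construct the q'-dialectical system $\us = (\A, H', r)$ by translating the data of $q = \langle H^\infty, f, f^-, c, c^-\rangle$ piece by piece, and then to prove that the run of $\us$ simulates the run of $q$ stage by stage, so that in the limit the two belief sets coincide. For the arguments, set $\A = \{a_i : i \in \nat\}$ and identify $a_i$ with the natural number $i$ (or with $f_i$); since $f$ is a computable permutation, the entrenchment order of $\us$ (the listing $a_0, a_1, \dots$) is meant to mirror the order $f_0, f_1, \dots$ in which $q$ introduces its arguments. For the replacement function, put $r(a_i) = a_j$ where $f_j = f^-(f_i)$; this is computable and acyclic because $f^-$ is acyclic and $f$ is a bijection, and the condition $c^- \notin \ran(f^-)$ is what lets us avoid ever needing to map a ``real'' argument to the contradiction symbol. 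For the \aco, define $H'(s, F)$ by: run the given computable approximation $H_s$ to the enumeration operator $H^\infty$ on $F$, and then translate the special element $c$ to $\bot$ and $c^-$ to $\ce$ (and drop the element $c$ or keep it as $\bot$ as appropriate); one checks the monotonicity, inclusion, stage-monotonicity, and Iteration conditions carry over from the corresponding properties of $H^\infty$ and $H_s$, using $H^\infty(\{c\}) = \nat$ to see that $\bot$ really does behave like an explosive contradiction.

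The heart of the proof is a simulation lemma: I would prove by induction on $s$ that the string $\sigma^\us_s$ of $\us$ at stage $s$ encodes exactly the stack configuration $(r_s(x))_{x \le p(s)}$ of $q$ at stage $s$ — reading the stacks $r_s(0), r_s(1), \dots$ left to right, with the twist that excised positions (where $q$ has produced $c$ and left a gap, i.e.\ where the clause-2 mechanism kills position $z-1$) correspond to the placeholder $\ast$ in $\sigma^\us_s$. The key observation is that in the original definition the stacks $r_s(x)$ are really single-element sequences except transiently at the position being revised, and the ``vertical stack'' bookkeeping in Definition~\ref{def:q-original} is exactly what the Contraction-then-Excision and Contraction-then-Replacement operations of Definition~\ref{def:q-run-ours} do in one combined step. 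Concretely: Clause 1 of $q$'s recursion matches Case 2 (expansion) of $\us$'s run; Clause 2 (a $c$ at position $z$) matches the $\bot$-branch of Case 1, producing $(\sigma^\us_s\restriction_{k-1})^\smallfrown \ast$ with $k-1 = z-1$; and Clause 3 (a $c^-$ at position $z$ with $\rho_s(z) = f_y$) matches the $\ce$-branch, producing $(\sigma^\us_s\restriction_{k-1})^\smallfrown r(\sigma^\us_s(k-1))$ where $r$ applied to $f_y$ yields $f^-(f_y)$ exactly as in $q$. One has to be careful about the ``least $k$'' rules: $\us$ picks the least $k$ with $\bot$ or $\ce$ in $H'(s, \ran(\sigma^\us_s\restriction_k))$, while $q$ picks the least $z \le m$ with $c$ or $c^-$ in $\chi_s(z) = H_s(L_s(z))$, and one must check these coincide under the encoding, using that $L_s(z)$ is the set of ``active'' arguments strictly below position $z$ in the stack, which is precisely $\ran(\sigma^\us_s\restriction_{z})$ minus the $\ast$'d positions — and since $\ast$ contributes nothing to the range, this is just $\ran(\sigma^\us_s\restriction_{k})$ for the matching $k$.

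Once the stagewise simulation is established, the conclusion $A^H_q = B_\us$ follows almost formally: $q$'s final theses are $\{f_x : \exists t\, \forall s > t\, f_x \in A_s\}$ where $A_s = \bigcup_{i < h(s)} \chi_s(i)$, and under the encoding $\bigcup_{i<h(s)}\chi_s(i)$ is exactly $\ran(\sigma^\us_s)$ (the active arguments at stage $s$), because $h(s)$ is designed precisely to exclude the transiently-doubled top position. So $f_x \in A^H_q$ iff $f_x$ is eventually permanently in $\ran(\sigma^\us_s)$, and since $q$ is loopless — meaning each position stabilizes — $\sigma^\us_s(n)$ converges for each $n$, so $\us$ is loopless and $f_x \in A^H_q$ iff $f_x \in \ran(\lim_s \sigma^\us_s) = B_\us$. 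I expect the main obstacle to be the bookkeeping around the off-by-one subtleties: matching $k$ (a length) in the q'-run against $z$ (a stack index) in the q-run, correctly accounting for the position $z-1$ that gets zeroed out in Clause 2 versus the $\ast$ that $\us$ writes, and verifying that the set $L_s(z)$ used to compute $\chi_s$ really does coincide with the range of the corresponding prefix of $\sigma^\us_s$ at every stage. This is the kind of thing that is conceptually clear but requires a genuinely careful induction to write out, which is presumably why the authors relegate the full argument here and describe it as ``somewhat technical.''
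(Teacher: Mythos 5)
Your construction and stagewise simulation are essentially the paper's: identify $a_i$ with $f_i$, set $r(a_i)=a_j$ where $f_j=f^-(f_i)$, translate $c\mapsto\bot$ and $c^-\mapsto\ce$ in the approximation $H_s$, and prove by induction that $\sigma^\us_s$ reproduces the top entries of the stacks $r_s(x)$ (with $\ast$ at emptied positions), matching Clause 1/2/3 of the original run with Expansion and the two branches of Case 1. That part is fine and is exactly how the paper proceeds.

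The gap is in your final paragraph. You claim that under the encoding $A_s=\bigcup_{i<h(s)}\chi_s(i)$ ``is exactly $\ran(\sigma^\us_s)$,'' but this is false: $\chi_s(i)=H_s(L_s(i))$ is the stage-$s$ set of \emph{consequences} of the believed initial segment, not the segment itself (indeed already $H^\infty(\emptyset)\neq\emptyset$ is required of $q$, so $A_s$ generally contains arguments that never sit on any stack). Consequently $A^H_q$ is a priori larger than the set of eventually-permanent stack values, and your ``almost formal'' conclusion only yields the easy inclusion $B_\us\subseteq A^H_q$. The converse inclusion $A^H_q\subseteq B_\us$ is genuinely nontrivial: one must show that in the loopless case every permanently derivable argument is itself eventually a permanent stack value, i.e.\ that the limiting stack set is closed under $H^\infty$ (equivalently $B_\us=H^\infty(B_\us)$). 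The paper does not reprove this; it invokes Lemma 3.14 of \cite{dialectical1}, which gives precisely $A^H_q\subseteq\{f_n\mid f_n=\lim_s\rho_s(n)\}$. So to complete your argument you must either cite that lemma or reprove the closure fact (using Monotony and Iteration of $H^\infty$ to argue that a stable consequence of a stable segment can never again be excised or replaced once proposed); the claim $A_s=\ran(\sigma^\us_s)$ should be dropped, as it is not needed for the direction it was meant to shortcut and is not true.
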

\begin{proof}
    Given $q$ a loopless q-dialectical system with an approximation operator $H_i$, we build $\us=(\A,H,r)$ as follows: $\A=\{a_{i} \mid i\in \nat\}$, and we identify the argument $a_i$ with the natural number $f_i$. In a q-dialectical system, $c,c^-\in \nat$, whereas we treat $\bot$ and $\ce$ as logical symbols in q'-dialectical systems. As such, we need to alter $H$ slightly to incorporate these logical symbols.
    Let $E$ be a function from $\pfin(\nat)$ to $\pfin(\nat\cup \{\bot,\ce\})$ as follows: 
    $E(X)=X$ if $c,c^-\notin X$. $E(X)=X\cup \{\bot\}$ if $c\in X,c^-\notin X$. $E(X)=X\cup \{\ce\}$ if $c^-\in X,c\notin X$. Finally $E(X)=X\cup \{\bot,\ce\}$ if $c,c^-\in X$ 
    We let $H(s,X)=E(H_s(X))$, 
    and we let $r(a_i)=a_y$ where $f_y=f^{-}(f_i)$.

    We now check by induction on stages $s$ that the
    %\footnote{We note that the indices are off by 1, because we begin with $\sigma^\us_0$ being the empty string, whereas $r_0(0)=\langle f_0\rangle$, thus corresponding to stage 1 in the run of $\us$.} 
    length of $\sigma^\us_{s}$ is exactly $p(s)$ and 
    \begin{equation}\label{eq:sigma is right}
        \sigma^\us_{s+1}(n) \begin{cases}
                        \rho_s(x), & \text{if $x< p(s), r_s(x)\neq \langle\rangle$}\text{\hfil}\\
        			\ast, & \text{if $x< p(s), r_s(x)= \langle\rangle$}.\text{\hfil}
        		 \end{cases}
    \end{equation}
    Also, $\rho(p(s))=\langle f_{p(s)}\rangle$.

    At $s=0$, we have $p(s)=0$, and $\sigma$ is the empty string, satisfying (\ref{eq:sigma is right}), and $\rho(p(s))=\rho(0)=\langle f_{0}\rangle = \langle f_{p(s)}\rangle$.
    
    We now consider the step of the run and see that we preserve these properties.

    \begin{enumerate}
        \item For each $z\leq p(s)$, $\{c,c^-\}\cap \chi_s(z)= \emptyset$. By inductive hypothesis, $L_s(z)=\ran(\sigma\restrict_z)$ for each $z\leq p(s)$. Thus, we are in Case 2 of Definition \ref{def:q-run-ours}. Then $\sigma_{s+1}^\us=(\sigma_s^\us)^\smallfrown a_{p(s)}$, $p(s+1)=p(s)+1$, and

        $$
    r_{s+1}(x)=\begin{cases}
                    r_s(x), & \text{if $x\leq p(s)$}\\
    			\langle f_{p(s)+1} \rangle, & \text{if $x=p(s)+1$}\\
                \langle\rangle, & \text{if $x>p(s)$}
    		 \end{cases}
    $$
    Since we know $r_s(p(s))=\langle f_{p(s)}\rangle$, we have preserved (\ref{eq:sigma is right}) and that $\rho(p(s+1))=\langle f_{p(s+1)}\rangle$.

        \item There exists $z\leq p(s)$ such that $c\in \chi_s(z)$, and for all $z'<z$, $c,c^-\notin \chi_s(z)$. By induction hypothesis, this means we are in the case of the first bullet point of Case 1 in Definition \ref{def:q-run-ours}. Thus $\sigma^\us_{s+1}=(\sigma^\us_s\restrict_{z-1})^\smallfrown\ast$.
        
        In this case, we have $p(s+1)=z$ and

    $$
    r_{s+1}(x)=\begin{cases}
                    r_s(x), & \text{if $x< z-1$}\\
    			\langle f_{z} \rangle, & \text{if $x=z$}\\
                \langle\rangle, & \text{if $x=z-1$ or $x>z$}.
    		 \end{cases}
    $$

    Note that for $x<z-1$, (\ref{eq:sigma is right}) holds at stage $s+1$ by inductive hypothesis, for $x=z-1$, it holds as in the second case of (\ref{eq:sigma is right}). Further, $\rho(p(s+1))=\langle f_{p(s+1)}\rangle$

    \item There exists $z\leq m$ such that $c^-\in \chi_s(z)$, for all $z'<z$, $c^-\notin \chi_s(z)$, and $c\notin \chi_s(z)$. By inductive hypothesis, we are in the second bullet of Case 1 of Definition \ref{def:q-run-ours}. This means that $\sigma_{s+1}^\us=(\sigma_s^\us\restrict_{z-1})^\smallfrown r(\sigma_s^\us(z-1))$. Let $y$ be so $f_y=\rho_s(z-1)$. Then by inductive hypothesis, $\sigma^\us_s(z-1)=a_y$. Then $r(a_y)=a_k$ where $f_k=f^-(f_y)$.
    
    We have $p(s+1)=z$ and
    
        $$
        r_{s+1}(x)=\begin{cases}
                        r_s(x), & \text{if $x< z-1$}\\
                        r_s(x)^\smallfrown \langle f^-(f_y)\rangle, & \text{if $x= z-1$}\\
        			\langle f_{z} \rangle, & \text{if $x=z$}\\
                    \langle\rangle, & \text{if $x>z$}
        		 \end{cases}
        $$

        Equation (\ref{eq:sigma is right}) holds for $x<z-1$ by inductive hypothesis. It holds for $x=z-1$ as $\sigma^\us_{s+1}(z-1)=r(\sigma_s^\us(z-1))=a_k$ and $\rho_{s+1}(z-1)=f^-(f_y)=f_k$. Finally, $p(s+1)=z$ and $\rho(p(s+1))=\langle f_{p(s+1)}\rangle$.
    \end{enumerate}

    We have now established (\ref{eq:sigma is right}) for every stage $s$. It follows that $a_n=\lim_s \sigma^\us_s$ if and only if $$(\exists t)(\forall s>t)[\rho_s(y)=f_n].$$ We write this as $\lim_s \rho_s(y)=f_n$.

    Then $B_\us=\{a_n\mid \exists y \lim_s \rho_s(y)=f_n\}$. It follows that $B_\us\subseteq A_q$ (identifying $a_n$ with $f_n$).

    By \cite[3.14]{dialectical1}, $A_q\subseteq \{f_n \mid f_n=\lim_s \rho_s(n)\}$, which is clearly a subset of $B_\us$. Thus $A_q=B_\us$.
\end{proof}

\begin{theorem}
    If $\us$ is a loopless q'-dialectical system, then there exists a q-dialectical system $q$ so that $A^H_q=B_\us$. 
\end{theorem}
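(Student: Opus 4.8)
The plan is to run the translation of the preceding theorem backwards: from a loopless q'-dialectical system $\us=(\A,H,r)$ I would build a quintuple $q=\langle H^\infty,f,f^-,c,c^-\rangle$ whose run reproduces the run of $\us$ up to a bounded, harmless rearrangement of the opening positions, so that~(\ref{eq:sigma is right}) can be read in reverse. For the static data: reserve two natural numbers $c,c^-$ as the codes of $\bot$ and $\ce$, fix a computable bijection of $\A$ with $\nat\smallsetminus\{c,c^-\}$, and let $f$ be the induced computable permutation of $\nat$, arranged so that $f_0=c^-$, $f_1=c$, and $f_{n+2}$ is the code of $a_n$; set $f^-(\text{code of }a_n)=\text{code of }r(a_n)$ and fix the remaining values $f^-(c),f^-(c^-)$ --- which the run of $q$ never actually invokes --- so that $f^-$ stays computable, acyclic, and $c^-$-avoiding. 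For the operator, writing $H^\infty_\us$ for $Y\mapsto\bigcup_{n,\,F\subseteq Y}H(n,F)$, set $H^\infty(F)=\nat$ whenever $c\in F$ or $\bot\in H^\infty_\us(F\cap\A)$, and otherwise let $H^\infty(F)$ be the codes of $H^\infty_\us(F\cap\A)$, plus $c^-$ if $c^-\in F$ or $\ce\in H^\infty_\us(F\cap\A)$, plus a fixed ``seed'', namely the codes of $H^\infty_\us(\{a^\ast\})$ with $a^\ast:=\lim_s\sigma^\us_s(0)$. This limit exists and lies in $B_\us$ precisely because $\us$ is loopless, and looplessness also gives $\bot,\ce\notin H^\infty_\us(\emptyset)$ and $\bot,\ce\notin H^\infty_\us(\{a^\ast\})$, so the seed makes $H^\infty(\emptyset)\neq\emptyset$ while (by deductive closure of $B_\us$) adding nothing outside $B_\us$ and introducing no spurious $c$ or $c^-$. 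It is essential that the explosion clause applies to $c$ but \emph{not} to $c^-$: exploding $c^-$ would, by the Iteration axiom, collapse $H^\infty(F)$ to $\nat$ on every $F$ from which $\us$ derives a counterexample. With $H_s$ the pairs of $H^\infty$ enumerated by stage $s$ (always keeping $\langle x,\{c\}\rangle\in H_s$ for $x\leq s$), one checks that $H^\infty$ is an enumeration operator, that $H^\infty(\{c\})=\nat$ and $c^-\notin\ran(f^-)$, and that the Tarskian axioms hold --- Monotony and Inclusion routinely, and Iteration by splitting into an explosion case (both sides $=\nat$) and a complementary case that reduces to Iteration for $H^\infty_\us$ on $\A$ together with the $H^\infty_\us$-closure of the seed and of the ``$c^-$-bit''. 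Thus $q$ is a q-dialectical system.

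The core of the proof is a stage-by-stage induction mirroring the one behind~(\ref{eq:sigma is right}). One shows that along the run of $q$ every stack $r_s(x)$ is single-valued and $\rho_s(p(s))=\langle f_{p(s)}\rangle$, and that, after the finitely many opening stages in which $c^-=f_0$ is cleared by one replacement and $c=f_1$ by an excision, the run of $q$ simulates that of $\us$: the nonempty stack tops of $q$, read in order, form a faithful copy of $\sigma^\us_{s+1}$ up to a fixed finite index shift (and, depending on the exact form of the corrected run recursion, one harmless duplicate of the opening argument), with $r_s(x)=\langle\rangle$ exactly where $\sigma^\us_{s+1}$ has $\ast$. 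The three clauses of the run of $q$ then correspond respectively to Case~2, the $\bot$-subcase of Case~1, and the $\ce$-subcase of Case~1 of Definition~\ref{def:q-run-ours}, and the identity $L_s(z)=\ran(\sigma^\us_s\restrict_{z'})$ for the shifted index $z'$ (available from the induction hypothesis) makes the ``least $z$'' chosen by $q$ match the ``least $k$'' of Definition~\ref{def:q-run-ours}; one must respect the one-stage offset already present in~(\ref{eq:sigma is right}), i.e.\ the off-by-one corrected in the footnote. Once the correspondence holds, $q$ is loopless, so \cite[3.14]{dialectical1} gives $A^H_q\subseteq\{f_n:f_n=\lim_s\rho_s(n)\}$, which the correspondence identifies with the codes of $B_\us$; and $B_\us=\{a_n:\exists y\ \lim_s\rho_s(y)=f_n\}\subseteq A^H_q$ is immediate from the correspondence. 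Hence $A^H_q=B_\us$.

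I expect the main obstacle to be exactly this encoding of the logical symbols. Because $c,c^-$ must be natural numbers and $f$ a permutation of all of $\nat$, they are forced to occur among the $f_i$, and the Inclusion axiom then makes $q$ react to their presence at the very start of its run; the work is to arrange their placement and the off-support values of $f^-$ so that this reaction is a bounded, harmless preamble that merely shifts the position indices by a fixed finite amount while leaving $B_\us$ unchanged, and to verify that the seed needed for $H^\infty(\emptyset)\neq\emptyset$ --- which is where the looplessness hypothesis really enters --- respects Iteration and does not inflate $A^H_q$. Once these points are settled, the induction, the verification of the Tarskian axioms, and the transfer of looplessness are a routine mirror of the forward direction.
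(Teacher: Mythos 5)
Your overall route is the same as the paper's: code $\bot,\ce$ as two reserved numbers $c,c^-$, shift the \axioms by two positions, transport $r$ to $f^-$, and run the induction behind (\ref{eq:sigma is right}) in reverse after a finite forced preamble. But there is a genuine gap exactly at the point you yourself flag as delicate, namely the preamble. Your claim that ``the run of $q$ never actually invokes $f^-(c),f^-(c^-)$'' is false for $f^-(c^-)$. By the Inclusion axiom, $c^-\in H^\infty(\{c^-\})$ no matter how you define the operator, so once $c^-$ sits at position $0$ (your arrangement $f_0=c^-$) the run must eventually see $c^-\in \chi_s(1)$ with $c\notin\chi_s(1)$ (your explosion clause deliberately does not fire on $\{c^-\}$), and Clause~3 then \emph{replaces} the occupant of that position by appending $f^-(c^-)$ to its stack. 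So $f^-(c^-)$ is invoked, and its value matters: if it is merely ``computable, acyclic, and $c^-$-avoiding'' it may be the code of some argument $a_m$, and then $a_m$ occupies position $0$ permanently (nothing forces $\bot$ or $\ce$ to be derivable from $\{a_m\}$). That single stray entry contaminates every base set $L_s(x)$, hence every $\chi_s(x)$, so the later run no longer simulates the run of $\us$, and $a_m$ together with its consequences enters $A^H_q$ even when $a_m\notin B_\us$; the ``bounded, harmless preamble'' is not established and $A^H_q=B_\us$ can fail. Note that your own description is internally inconsistent here: you say $c^-=f_0$ ``is cleared by one replacement,'' but a replacement does not clear a position, it refills it with $f^-(c^-)$.

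The paper closes exactly this hole by choosing the off-support value deliberately: it sets $f^-$ on the code of $\ce$ to be the code of $\bot$ (in its indexing, $f^-(f_1)=f_0$ with $c=0$, $c^-=1$), so the forced replacement of $c^-$ installs $c$, which immediately explodes and is excised, leaving the first two positions permanently empty after finitely many stages; only then does the shifted version of (\ref{eq:sigma is right}) propagate. Your proof is repaired by the same move, $f^-(c^-)=c$ (after which $f^-(c)$ really is never invoked and can be chosen to keep $f^-$ acyclic). Two smaller points: your ``seed'' trick to secure $H^\infty(\emptyset)\neq\emptyset$ is fine and is a point the paper glosses over (looplessness and $B_\us=H^\infty(B_\us)$ do keep it inside $B_\us$); but taking $H_s$ to be ``the pairs of $H^\infty$ enumerated by stage $s$'' is too loose for your stage-by-stage induction, since runs are sensitive to the timing of the approximation---you should define $H_s$ directly from $H(s,\cdot)$ (with a bound such as the paper's $M(s)$ to keep each $H_s$ finite) so that the stage correspondence, with its fixed shift, actually holds.
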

\begin{proof}
    Given a loopless q'-dialectical system $\us$, we build a q-dialectical system $q=\langle H^\infty,f,f^-,c,c^-\rangle$. In $\us$, $\bot$ and $\ce$ are logical symbols, but in $q$, we must make them arguments. To do so, we let $0$ represent the argument $\bot$, $1$ represent the argument $\ce$, and $n+2$ represent the argument $a_n$. We then let $f$ be the identity on $\nat$ and we identify $f_i=i$ with $a_i$. We define $f^-(f_0)=f_1,f^-(f_1)=f_0$, and $f^-(f_{n+2})=f_{k+2}$ where $r(a_n)=a_k$. 

    We define a computable function $M$ so that $M(s)$ is greater than $r^t(x)$ for each $x\leq s$ and $t\leq s$. In particular, $M$ is a number guaranteed to be larger than any $k$ so that $a_k$ can appeared in the run of $\us$ by stage $s$.
    We let $H_{s+5}=\{\langle x, F \rangle \mid a_{x-2}\in H(s,\{a_{y-2}\mid y\in F\})\wedge \max(F)\leq M(s)\}\cup \{\langle 0,F\rangle \mid \bot\in H(s,F)\wedge \max(F)\leq M(s)\}\cup \{\langle 1,F\rangle \mid \ce\in H(s,F)\wedge \max(F)\leq M(s)\}$ and note that this defines $H^\infty$ which is a Tarskian consequence operator. The role of $M$ is to ensure that $H_s$ is a finite set for each $s$, since we did not assume finiteness of an \aco. This makes no difference in the run of either $\us$ or $q$, since any set $F$ which we can possibly consider in a run of either $\us$ or $q$ has the property that $H_{s+5}(F)=H(s,F)$.
    
    We let $c=0$ and $c^-=1$. This defines our q-dialectical system $q$. We aim to show that $a_i\in B_\us$ if and only if $f_{i+2}\in A_q^H$. Since we identify $a_n$ with $f_{n+2}$, this means $A_q^H=B_\us$.

    The first few steps of the run of $q$ are determined by the fact that $c=0$ and $c^-=1$. At the first few stages:

    $$
    r_0(x)=\begin{cases}
			\langle f_0 \rangle, & \text{if $x=0$}\\
            \langle\rangle, & \text{otherwise}
		 \end{cases}
    $$

    $$
    r_1(x)=\begin{cases}
    			\langle f_x \rangle, & \text{if $x\leq 1$}\\
                \langle\rangle, & \text{otherwise}
    		 \end{cases}
    $$

    $$
    r_2(x)=\begin{cases}
    			\langle f_1 \rangle, & \text{if $x=1$}\\
                \langle\rangle, & \text{otherwise}
    		 \end{cases}
    $$

    $$
    r_3(x)=\begin{cases}
    			\langle f_x \rangle, & \text{if $x=1,2$}\\
                \langle\rangle, & \text{otherwise}
    		 \end{cases}
    $$

    $$
    r_4(x)=\begin{cases}
    			\langle f_1,f_0 \rangle, & \text{if $x=1$}\\
                \langle f_2 \rangle, & \text{if $x=2$}\\
                \langle\rangle, & \text{otherwise}
    		 \end{cases}
    $$
    $$
    r_5(x)=\begin{cases}
                \langle f_2 \rangle, & \text{if $x=2$}\\
                \langle\rangle, & \text{otherwise}
    		 \end{cases}
    $$

    We note that at stage 5, we satisfy the following equation. Exactly as in the detailed induction of the previous theorem, we will maintain the following equation for each $s\geq 5$: 

    \begin{equation}\label{eq:sigma is right - shifted}
        \sigma^\us_{s-5}(n) = \begin{cases}
                        \rho_s(n+2), & \text{if $n+2< p(s), r_s(n+2)\neq \langle\rangle$}\text{\hfil}\\
        			\ast, & \text{if $n+2< p(s), r_s(n+2)= \langle\rangle$}\text{\hfil}
        		 \end{cases}
    \end{equation}

    Note that if $\rho_s=f_{k}$, we are identifying $f_k$ with the argument $a_{k-2}$. In this sense, we say $\sigma^\us_{s-5}(n)=\rho_s(n+2)$, i.e. $\sigma^\us_{s-4}(n)=a_{k-2}$ where $\rho_s(n+2)=f_k$.
    %Further, $\rho(p(s))=\langle f_{p(s)}\rangle$.

    It follows as in the previous theorem that $A_{q}=B_\us$, i.e. $a_n\in B_\us$ if and only if $f_{n+2}\in A_q$.

    We note that it follows that if $\us$ is loopless, then $B_\us=H^\infty(B_\us)$. This is immediate from the fact that $H^\infty(A_q)=A_q$.
\end{proof}

Next we consider the case with loops, where the sets $B_\us$ for q'-dialectical systems are not the same as the sets $A_q$ for q-dialectical systems. This is a cosmetic difference, since the set $A_q$ is defined as being closed under $H^\infty$, whereas $B_\us$ is not.

We use a characterization \cite[Theorem 3.13]{dialectical1} of the sets $A_q$ for q-dialectical systems with loops. A co-infinite c.e.\ set is non-simple if its complement contains an infinite c.e.\ subset. As every infinite c.e.\ set contains an infinite computable subset, this is equivalent to saying that a co-infinite c.e.\ set is non-simple if its complement contains an infinite computable subset (see \cite[\S 8.1]{rogers1987theory} for more on simple c.e.\ sets). 

\begin{theorem}
    The collections of sets $A_q$ for $q$ a q-dialectical system with a loop are exactly the sets $H^\infty(B_\us)$ for q'-dialectical systems $\us$. Both coincide with the collection of co-infinite non-simple c.e.\ sets.
\end{theorem}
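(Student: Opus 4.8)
The plan is to establish directly that the sets $H^\infty(B_\us)$, as $\us$ ranges over q'-dialectical systems with a loop, are exactly the co-infinite non-simple c.e.\ sets, and then to read off the statement about the sets $A_q$ from the cited characterization \cite[Theorem 3.13]{dialectical1}. So I would prove two inclusions: (i) if $\us$ has a loop then $H^\infty(B_\us)$ is co-infinite, non-simple, and c.e.; and (ii) every co-infinite non-simple c.e.\ set arises as $H^\infty(B_\us)$ for some q'-dialectical system $\us$ with a loop. Combined with \cite[Theorem 3.13]{dialectical1}, both families coincide with the co-infinite non-simple c.e.\ sets, hence with each other.

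For (i), the crucial structural fact is that a looping $\us$ has a \emph{finite} limiting belief set. I would argue as follows: let $\ell$ be the least position at which $s\mapsto\sigma^\us_s(\ell)$ does not converge; positions below $\ell$ converge, so $B_\us$ is finite once no position $m\geq\ell$ converges. A bookkeeping argument on the run gives that, eventually, every contraction step uses contraction point $\geq\ell+1$ (otherwise some position $<\ell$ would be altered infinitely often), hence each position $>\ell$ is reset to undefined infinitely often (so does not converge), while position $\ell$ changes infinitely often, each such change eventually being a replacement (an $\ast$ at position $\ell$ could only be overwritten by a contraction at point $\leq\ell$, which no longer occurs). Thus $\sigma^\us_s(\ell)$ runs through an infinite $r$-orbit $v_0, v_1=r(v_0), v_2=r^2(v_0),\dots$, all distinct since $r$ is acyclic. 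At the stage where $v_j$ is replaced by $v_{j+1}$ we have $\ce\in H(s,\ran(\sigma^\us_s\restrict_\ell)\cup\{v_j\})$, and for all large $j$ this set is $B_\us\cup\{v_j\}$; moreover, since contraction points are eventually $\geq\ell+1$, neither $\bot$ nor $\ce$ is ever derived from $\ran(\sigma^\us_s\restrict_\ell)=B_\us$, so $\bot,\ce\notin H^\infty(B_\us)$ and $H^\infty(B_\us)\subseteq\A$. If some $v_j\in H^\infty(B_\us)$, then $B_\us\cup\{v_j\}\subseteq H^\infty(B_\us)\cap\A$, whence by Monotony and Iteration $\ce\in H^\infty(B_\us\cup\{v_j\})\subseteq H^\infty(H^\infty(B_\us)\cap\A)=H^\infty(B_\us)$, a contradiction; so $\{v_j:j\geq J\}$ is an infinite c.e.\ subset of the complement of $H^\infty(B_\us)$. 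This makes $H^\infty(B_\us)$ co-infinite and non-simple, and it is c.e.\ because $B_\us$ is finite (hence computable) and $H^\infty(B_\us)=\bigcup_{s\in\nat,\,F\subseteq B_\us}H(s,F)$.

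For (ii), given a co-infinite non-simple c.e.\ set $C$, I would fix an infinite computable set $W=\{w_0<w_1<\cdots\}$ contained in the complement of $C$ (the complement contains an infinite c.e.\ set, hence an infinite computable one). Choose the listing of $\A$ with $a_0=w_0$, set $r(w_j)=w_{j+1}$, extend $r$ to a computable acyclic function on $\A$, and define $H(s,X)=X\cup\{c_0,\dots,c_s\}\cup(\{\ce\}$ if $X\cap W\neq\emptyset)$, where $(c_i)_{i\in\nat}$ enumerates $C$. One checks $H$ is an \aco; Iteration uses $C\cap W=\emptyset$, which gives $(X\cup C)\cap W=X\cap W$ and hence $H^\infty(X\cup C)=H^\infty(X)$. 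In the run of $\us$, no contradiction or counterexample is derived from $\ran(\sigma^\us_s\restrict_0)=\emptyset$, so at stage $1$ the system expands to $\langle w_0\rangle$ and thereafter repeatedly replaces $w_j$ by $r(w_j)=w_{j+1}$ (since $\ce\in H(s,\{w_j\})$); thus $\us$ has a loop, $B_\us=\emptyset$, and $H^\infty(B_\us)=H^\infty(\emptyset)=C$.

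The main obstacle is part (i): namely, the structural claims that $B_\us$ is finite and that the loop is forced to be a replacement loop cycling through an $r$-orbit which, in the tail, is disjoint from $H^\infty(B_\us)$. Establishing these requires a careful analysis of how the contraction point evolves along the run — that it is eventually bounded below by $\ell+1$, that an excised position cannot be revived, and that positions above $\ell$ are demolished infinitely often — the same flavor of reasoning used in the verification of the Main Theorem. The consequence-operator manipulations in (i) and the explicit construction in (ii) are then routine.
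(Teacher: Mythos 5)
Your proposal is correct and takes essentially the same route as the paper: the $A_q$ side is delegated to the cited characterization \cite[Theorem 3.13]{dialectical1}, an arbitrary co-infinite non-simple c.e.\ set is realized by the same replacement-loop construction over an infinite computable subset of its complement (giving $B_\us=\emptyset$ and $H^\infty(\emptyset)=C$), and the converse direction extracts an infinite $r$-orbit lying outside $H^\infty(B_\us)$. Your part (i) simply spells out in more detail (finiteness of $B_\us$, eventual replacement-only behavior at the least non-converging position, and the Tarskian-operator argument for the tail of the orbit) what the paper states tersely.
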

\begin{proof}
    The fact that the collection of sets $A_q$ for q-dialectical systems is exactly the collection of co-infinite non-simple c.e.\ sets is \cite[Theorem 3.13]{dialectical1}.

    Fix a co-infinite non-simple c.e.\ set $X$. Let $X=\bigcup_s X_s$ be a computable approximation to $X$. This means that if $X$ is the range of the computable function $f$, then $X_s$ is the range of $f\restrict_{[0,s]}$. Then $X=\bigcup_s X_s$. 
    
    Let $B$ be an infinite computable subset of $\nat\smallsetminus X$. Let $(b_i)_{i\in \omega}$ be a computable enumeration of $B$.
    Let $\A=\{a_i\mid i\in \nat\}$ be any enumeration of $\nat$ where $a_0=b_0$. We define $r(b_i)=b_{i+1}$, and for $x\notin B$, $r(x)=y$ where $y$ is the least element of $\nat\smallsetminus B$ which is $>x$. Let $H(s,Y)=X_s$ if $Y\cap B=\emptyset$ and $H(s,Y)=X_s\cup \{\ce\}$ if $Y\cap B\neq\emptyset$. It is now straightforward to check that $B_\us=\emptyset$, as the element $a_0=b_0$ will keep getting replaced by elements of $B$. Thus $H^\infty(B_\us)=H^\infty(\emptyset)=X$. 

    Let $\us$ be a q'-dialectical system with a loop. Then $B_\us$ is finite. Let $X=H^\infty(B_\us)$. Then $X$ is c.e.\ since $H^\infty$ is computably approximated by $H$. Let $n$ be least so that $\lim_s\sigma^\us_s(n)$ is undefined. Then the set $\{r^k(a_n)\mid k\in \nat\}$ is in $\nat\smallsetminus X$. Since $r$ is acyclic, this shows that $X$ is co-infinite and non-simple. 
\end{proof}

Putting these results together, we have established the following theorem:
\begin{theorem}\label{thm:equivalence}
    The sets $A_q$ for $q$ a q-dialectical system are exactly the sets $H^\infty(B_\us)$ for $\us$ a q'-dialectical system. In the loopless case, then $H^\infty(B_\us)=B_\us$.
\end{theorem}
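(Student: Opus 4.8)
The plan is to assemble Theorem~\ref{thm:equivalence} directly from the three preceding theorems by splitting on whether the systems involved are loopless or contain a loop. The guiding observation is that every q-dialectical system and every q'-dialectical system falls into exactly one of these two categories, and that the translations constructed in the first two theorems preserve looplessness stage-by-stage through the run correspondences recorded in equations~(\ref{eq:sigma is right}) and~(\ref{eq:sigma is right - shifted}). Thus the loopless-vs-loop dichotomy on one side matches the dichotomy on the other, and the equivalence will follow by gluing the loopless case to the loop case.

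For the inclusion stating that every $A_q$ is some $H^\infty(B_\us)$, I would take an arbitrary q-dialectical system $q$. If $q$ is loopless, the first theorem yields a q'-dialectical system $\us$ with $A_q=B_\us$; the run correspondence forces $\us$ to be loopless, and the remark closing the proof of the second theorem then gives $H^\infty(B_\us)=B_\us=A_q$. If instead $q$ has a loop, the third theorem already exhibits $A_q$ as a co-infinite non-simple c.e.\ set realized as $H^\infty(B_\us)$ for a suitable q' with a loop. The reverse inclusion is symmetric: for a loopless $\us$ one uses $H^\infty(B_\us)=B_\us$ together with the second theorem, and for $\us$ with a loop one again appeals to the third theorem.

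The final clause---that $H^\infty(B_\us)=B_\us$ whenever $\us$ is loopless---is the cleanest ingredient. For a loopless $\us$, the translation gives $A_q=B_\us$ for a corresponding loopless $q$; since the set of final theses $A_q$ is deductively closed (hence fixed by $H^\infty$), applying $H^\infty$ to both sides yields $H^\infty(B_\us)=H^\infty(A_q)=A_q=B_\us$.

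The one genuinely delicate point I expect is verifying that looplessness transfers across the translations, so that the two case analyses can be combined without a gap. This reduces to observing, via the stage-by-stage equivalences already established, that $\lim_s\sigma^\us_s(n)$ exists for every $n$ exactly when $\lim_s\rho_s(y)$ exists for every $y$; granting this, the combination of the three theorems is pure bookkeeping.
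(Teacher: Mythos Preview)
Your proposal is correct and matches the paper's approach: the paper's proof is literally the single sentence ``Putting these results together, we have established the following theorem,'' and what you have written is precisely the bookkeeping that this sentence abbreviates. Your identification of the looplessness-transfer issue as the only nontrivial point is apt; the paper leaves it implicit in the stage-by-stage correspondences (\ref{eq:sigma is right}) and (\ref{eq:sigma is right - shifted}), and the closure clause $H^\infty(B_\us)=B_\us$ is exactly the remark at the end of the second translation theorem, derived just as you describe.
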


We note that since p-dialectical and (d-)dialectical systems are special cases of q-dialectical systems (where we do not use $\bot$ or $\ce$, respectively), the analog of Theorem \ref{thm:equivalence} follows for p-dialectical and (d-)dialectical systems as well.

\section{Details for the verification of Theorem \ref{thm:Main}}\label{sec:appx-verification}

\lemmahandsoff*
\begin{proof}
    We prove this by induction on the strategies. For the strategy $\strat_0$ this is trivially true. We suppose it is true for the strategy $\strat_j$, and we must prove that it is true for the strategy $\strat_{j+1}$. 
    In particular, $N$ is chosen large, so that it is not part of any counterexample or contradiction placed by a higher-priority strategy, i.e., if a higher-priority strategy put $\ce$ into $H(s,F)$ for some set $F$, then $N$ is larger than any $k$ for $a_k\in F$. Thus this consequence will ensure that $F\not\subseteq \ran(\sigma^\us_t)$ for all later stages $t$, but will not remove $a_K$ for any $K>N$ from $\ran(\sigma^\us_t)$. 
    We consider the possibilities based on the outcome of the strategy $\strat_j$:

    If $\strat_j$ gets stuck in Step 2 or PO2, then it causes no contradictions or counterexamples at all. Then $\strat_{j+1}$ chooses its parameter $N$, and we have no counterexamples or contradictions involving any $a_k$ with $k\in [N_j,N)$, so we still have $B_\us\cap \{a_i \mid i<N\} = \{a_i \mid i<N\}\smallsetminus \bigcup_{k<j} Z_k=\{a_i \mid i<N\}\smallsetminus \bigcup_{k< j+1} Z_k$.

    If $\strat_j$ gets stuck in Step 5, then $\strat_j$ has placed one counterexample or contradiction, either putting $\bot$ or $\ce$ in $H(s,S\cup \{a_{N_j}\})$. Since, by inductive hypothesis, $S\subseteq B_\us$, it follows that the effect of this one counterexample or contradiction is that $a_{N_j}\notin B_\us$. Further, the strategy $\strat_{j+1}$ is last activated when $\strat_j$ enters Step 5. Though lower-priority strategies may have acted previously, they all put $a_{N_j}$ into every counterexample or contradiction they put in $H$, so they have no effect on $B_\us$, since at every stage $t$ after $s$, we have that $a_{N_j}\notin \ran((\sigma^\us_t)^-)$. It follows that no \axiom $a_K$ with $K\in [N_j,N)$ will be either replaced or excised after stage $s$.

    If $\strat_j$ gets stuck in Step 7, then $\strat_j$ has placed 2 counterexamples or contradictions: One puts either $\bot$ or $\ce$ into $H(t,S\cup \{a_{N_j}\})$. The other puts $\bot$ into $H(t,S\cup \{a_{N_j+1},a_{N_j+2}\})$. This has the immediate effect of having either excised or replaced $a_{N_j}$ and having either $a_{N_j+1}$ or $a_{N_j+2}$ excised (depending on order). These are the only additional consequences placed in $H$ by $\strat_j$. Again, all lower-priority strategies which placed counterexamples or contradictions before $\strat_j$ entered Step 7 did so with $a_{N_j+1},a_{N_j+2}$ both in the hypotheses. It follows that they have no effect after the stage where we put $\bot$ into $H(t,S\cup \{a_{N_j+1},a_{N_j+2}\})$. Thus no \axiom $a_K$ with $K\in [N_{j}+3,N)$ will be either replaced or excised after stage $s$. We note that among $a_{N_j+1}$ and $a_{N_j+2}$, only the one which appears first in $\sigma^\us_t$ will remain, showing that $Z_j$ is precisely the \axioms removed from $B_\us$ among $\{a_{N},a_{N+1},a_{N+2}\}$.

    If $\strat_j$ gets completed in Step 8: Then $B_\us\neq B_{\them_j}$ has placed 3 counterexamples or contradictions in $H$: One puts either $\bot$ or $\ce$ into $H(t,S\cup \{a_{N_j}\})$. Another puts $\bot$ into $H(t,S\cup \{a_{N_j+1},a_{N_j+2}\}$. The third puts $\bot$ into $H(t',S\cup \{a_{J}\})$ for one $J\in \{a_{N_j+1},a_{N_j+2}\}$. In this case, the immediate result is that $\sigma^\us_{t'}=\sigma^\us_{s}\vert N^\smallfrown \gamma$ where $\gamma$ is a string of length 3 comprised of entries which are either $\ast$ or equal to the $a_I$ with $a_J\neq a_I\in \{a_{N_j+1},a_{N_j+2}\} $. Note that every counterexample or contradicion which was placed in $H$ previously by lower-priority requirements had either $a_{N}$ in the hypotheses (if placed before Step 4 of $B_\us\neq B_{\them_j}$) or had $a_J$ in the hypotheses (if placed during Step 7). In particular, none of these will have any effect at future stages where we have put $a_{I}$ into $\ran(\sigma^\us_{t'})$. Again, we see that every $K\in [N_{j}+3,N)$ will never be removed after stage $s$ and that $Z_i$ is precisely the \axioms among $\{a_{N_j},a_{N_j+1}, a_{N_j+2}\}$ which are removed from $\lim_s \sigma^\us_s$.   
\end{proof}

\lemmaEsAreRight*
\begin{proof}
    We apply Lemma \ref{lem:hands off} to the strategy $\strat_{i+1}$. Note that $M$ is mentioned by stage $s$ and, when \texttt{PredictOrder}$(\them_i,s,a_N,a_{N+1},a_{N+2})$ returns a value at stage $s$, the strategy $\strat_i$ acts via Step 4. This deactivates all lower-priority strategies. In particular, when the strategy $\strat_{i+1}$ is finally activated, it chooses a parameter $N>M$. Thus we can apply Lemma \ref{lem:hands off} to see that $B_\us\cap \{a_i \mid i<N\} = \{a_i \mid i<N\}\smallsetminus \bigcup_{j\leq i} Z_j$. In particular, since $Z_i\subseteq \{a_N,a_{N+1},a_{N+2}\}$ and $a_N\in Z_i$, we get that $B_\us\cap (\{a_j \mid j\leq M\}\smallsetminus \{a_{N+1},a_{N+2}\})= \{a_j \mid j\leq M\}\smallsetminus\bigcup_{j< i} Z_j=\{a_j \mid j\leq M\}\smallsetminus (\{a_{N+1},a_{N+2}\}\cup E)$.
\end{proof}

\lemmaPredictOrderIsRight*
\begin{proof}
    % We first note that if $B_\us= B_{\them_i}$, then in particular, $\them_i$ is a p-dialectical system and $r^{\them_i}$ must be a total function. In particular, since $r^{\them_i}$ is also acyclic, we will at some stage see numbers $o_j$ so that $(r^{\them_i})^{\circ o_j} (g_j)\notin E$, since $E$ was defined to be a finite set. Thus the function PredictOrder$(\them_i,s,a_N,a_{N+1},a_{N+2})$ will return some string $\rho$. At that point, we will either act by adding $\ce$ or $\bot$ to $H(s',S\cup \{a_N\})$. This invalidates any \axiom placed into $H$ by lower-priority requirements. Note that all lower-priority requirements which become active \emph{after} this stage choose their $N$ larger than any element of $\ran(\tau)$, since each of these numbers have been mentioned.

    We argue that either $B_\us\neq B_{\them_i}$ or at some later stage $t$, we have $\rho\preceq \sigma^{\them_i}_t$. Suppose towards a contradiction that we never see $\rho\preceq \sigma^{\them_i}_t$ for any later stage $t$. We prove by induction on $e\leq \vert \rho\vert$ that at all sufficiently large stages, we will see $\rho\restrict_e\preceq \sigma^{\them_i}_t$. In particular, for $e=\vert \rho \vert$, this will be our desired contradiction. Note that this is trivially true for $e=0$.
    
    For the induction, we will proceed to show that $\rho\restrict_{e+1}\preceq \sigma^{\them_i}_t$ for all sufficiently large $t$, given that we know $\rho\restrict_e\preceq \sigma^{\them_i}_t$ for all sufficiently large $t$. Fix $e<n$ and $t_0$ large enough that $\rho\restrict_e \preceq \sigma_t^{\them_i}$ for all $t\geq t_0$. Now we argue that $\sigma^{\them_i}_{t'}$ will change with $t'$ until we get to $(r^{\them_i})^{e_j}(g_e)$. If $\sigma^{\them_i}_{t'}(e)\in E$ and is not replaced, then $\sigma^{\them_i}_{t'}(e)\in B_{\them_i}\smallsetminus B_\us$ by Lemma \ref{lem: Es are right}. Thus $\sigma^{\them_i}_{t'}(e)$ must be replaced until it is no longer in $E$. But once it finds an \axiom $\notin E$, this must be the limiting value of $\sigma^{\them_i}_{t'}(e)$. This is because a change to this would require $\ce$ to enter $H^{\them_i}(s',\sigma^{\them_i}_{t'}\restriction_{e+1})$. In particular, a subset of $\{a_j \mid j\leq M\}\smallsetminus (\{a_{N+1},a_{N+2}\}\cup E)$ gives a counterexample in $H^{\them_i}$. But by Lemma \ref{lem: Es are right}, this subset is a subset of $B_\us$, which would then not be a subset of $B_{\them_i}$, showing that $B_\us\neq B_{\them_i}$. This shows that the limiting value of $\sigma^{\them_i}_{t'}(e)$ is $\rho(e)$, thus $\rho\restrict_{e+1}\preceq \sigma^{\them_i}_t$ for all sufficiently large $t$, completing the induction.
    % Inducting as such brings us to the value $e$ so that $\tau(e)=a_{N+1}$ or $a_{N+2}$. This is the $\tau$ that PredictOrder$(\them_i,s,a_N,a_{N+1},a_{N+2})$ returns, and we have argued that either the requirement is satisfied or at some point we will see $\tau\preceq \sigma_t^{\them_i}$.

    By Lemma \ref{lem: Es are right}, $\ran(\rho\restriction_{\vert \rho\vert -1})$ is a subset of $B_\us$. Thus once we have $\rho\restriction_{\vert \rho\vert -1}\preceq \sigma^{\them_i}_t$, we either have $\rho^-\preceq \lim_s \sigma_s^{\them_i}$ or some subset $F$ of $\ran(\rho\restriction_{\vert \rho\vert -1})$ must eventually have $\ce\in H(t,F)$, but this would ensure that $F\not\subseteq B_{\them_i}$ despite $F$ being a subset of $B_\us$.
\end{proof}

\lemmaStopInjury*
\begin{proof}
    This is proved by induction on $i$. Starting at stage $1$, $\strat_0$ is activated and can never be deactivated since there is no higher-priority strategy.

    Let $s$ be a stage after which all requirements $\strat_i$ for $i<K$ are always activated. Then no strategy $\strat_j$ with $j<K-1$ can ever act again, or it would deactivate the strategy $\strat_{K-1}$. Then the strategy $\strat_{K-1}$ can act at most finitely many times after stage $s$ (as it moves through its finitely many steps). After these finitely many actions of $\strat_{K-1}$, the strategy $\strat_K$ will be activated and can never be deactivated again.
\end{proof}

\lemmaWeWin*
\begin{proof}
    We need to show that the strategy $\strat_i$ succeeds. Let $s$ be the least stage after which the strategy $\strat_i$ is never deactivated. 
    % If the strategy $B_\us\neq B_{\them_i}$ gets stuck in Step 2, let $Z_i=\emptyset$. If it gets stuck in Step 5, let $Z_i=\{a_N\}$. If the strategy $B_\us\neq B_{\them_i}$ gets stuck in Step 7, let $Z_i=\{a_N,a_i\}$. If the strategy $B_\us\neq B_{\them_i}$ gets stuck in Step 2, let $Z_i=\{a_N\}$. If the strategy goes throug Step 8, let $Z_i=\{a_N,a_{j}\}$. The set $Z_i$ represents the elements of $\A$ which the strategy $B_\us\neq B_{\them_i}$ removes from $B_\us$. 
    We now argue in each possible outcome of $\strat_i$ that $B_\us\neq B_{\them_i}$.

    If the strategy gets stuck in the WAIT of Step 2, then $a_N,a_{N+1},a_{N+2}$ are not all in the domain of $\them_i$, so certainly are not in $B_{\them_i}$, but $a_N,a_{N+1},a_{N+2}\in B_\us$ by Lemma \ref{lem:hands off}.

    If the strategy gets stuck in the WAIT of Step 5, then Lemma \ref{lem:PredictOrder is right} shows that $B_\us\neq B_{\them_i}$.

    If the strategy gets stuck in Step 7, then $a_i\in B_{\them_i}$ since $\rho\preceq \lim_s \sigma_s^{\them_i}$. But $a_i\notin B_\us$ by Lemma \ref{lem:hands off}.

    If the strategy gets stuck in Step PO2, then $r^{\Lambda_i}$ is either partial or cyclic. In either case, this implies that $\them_i$ is not a p-dialectical system at all. 

     Finally, we consider the case where the strategy completes Step 8. By Lemma \ref{lem:hands off}, $a_i\in B_\us$, but in Step 7, we saw $\ce\in H^{\them_i}(t',\rho)$. But since $\rho^-\preceq \lim_s \sigma^{\them_i}_s$, this implies that $a_i\notin B_{\them_i}$.
    % $S\subseteq B_\us$, which implies that $a_j\notin B_\us$. Since the only \axiom involving $a_i$ without any $a_K$ for $K>{N+2}$ is the \axiom $\bot\in H(t, S\cup \{a_i,a_j\})$, we see that $a_{i}\in B_\us$. But in Step 7, we saw $\ce\in H^{\them_i}(t',\rho)$. But since $\rho^-\preceq \lim_s \sigma^{\them_i}_s$, this implies that $a_i\notin B_{\them_i}$
\end{proof}

\end{document}